%
%

\documentclass[11pt,a4paper]{article}
\usepackage[hyperref]{emnlp2020}
\usepackage{times}
\usepackage{latexsym}

\usepackage{microtype}

\aclfinalcopy 


\usepackage{amsmath}
\usepackage{amssymb}
\usepackage{mathrsfs}
\usepackage{xcolor}
\usepackage{amsthm}
\usepackage{amsfonts}
\usepackage{graphicx}
\usepackage{stmaryrd}
\usepackage{algorithm}
\usepackage{xr}

\usepackage{times}
\usepackage{latexsym}
\usepackage{url}
\usepackage{array}
\usepackage{booktabs}
\usepackage{balance}
\usepackage{colortbl}
\usepackage{makecell}
\usepackage{soul}
\usepackage{blindtext}
\usepackage{multirow}
\usepackage{subcaption}
\usepackage{amssymb}
\usepackage{pifont}
\newcommand{\cmark}{\ding{51}}%
\newcommand{\xmark}{\ding{55}}%

\newcolumntype{P}[1]{>{\centering\arraybackslash}p{#1}}


\usepackage{microtype}



\newcommand{\eqpunc}[1]{{\makebox[0pt][l]{\qquad\rm{#1}}}}

\theoremstyle{definition}
\newtheorem{definition}{Definition}[section]

\theoremstyle{remark}

\theoremstyle{plain}
\newtheorem{theorem}{Theorem}[section]

\theoremstyle{plain}

\theoremstyle{plain}

\newtheorem{proposition}[theorem]{Proposition}
\newtheorem{lemma}[theorem]{Lemma}

\usepackage{graphicx,wrapfig,lipsum}


\usepackage{amsmath,amsfonts,bm}









\def\eqref#1{(\ref{#1})}









\def\1{\bm{1}}






\def\rmI{{\mathbf{I}}}


\def\vzero{{\bm{0}}}

\def\va{{\bm{a}}}

\def\vc{{\bm{c}}}

\def\vq{{\bm{q}}}

\def\vv{{\bm{v}}}

\def\vx{{\bm{x}}}
\def\vy{{\bm{y}}}
\def\vz{{\bm{z}}}



\def\mX{{\bm{X}}}

\DeclareMathAlphabet{\mathsfit}{\encodingdefault}{\sfdefault}{m}{sl}
\SetMathAlphabet{\mathsfit}{bold}{\encodingdefault}{\sfdefault}{bx}{n}














\usepackage{commath}
\usepackage{amssymb}
\usepackage{amsmath,amsfonts,bm}

\newcommand{\br}{\mathbb{R}}

\newcommand{\dyck}{Dyck-1}
\newcommand{\shuff}{Shuffle-Dyck}
\newcommand{\bool}{BoolExp}
\newcommand{\booln}{BoolExp-$n$}
\newcommand{\kshuff}{Shuffle-$k$}
\newcommand{\cnt}[1]{\ensuremath\sigma(#1)}
\newcommand{\Ctwo}{\ensuremath a^nb^n}
\newcommand{\Cthree}{\ensuremath a^nb^nc^n}
\newcommand{\dn}[1]{\ensuremath \mathcal{D}_{#1}}

\newcommand{\ResetDyck}{Reset-Dyck-$1$}

\newcommand{\Att}{\operatorname{Att}}
\newcommand{\<}{\ensuremath \langle}
\renewcommand{\>}{\ensuremath \rangle}

\newcommand{\softmax}{\ensuremath{\mathsf{softmax}}}

\title{On the Ability and Limitations of Transformers to \\ Recognize Formal Languages}

\author{Satwik Bhattamishra$^\spadesuit$  \quad Kabir Ahuja$^\diamondsuit$\thanks{\, This research was conducted during the author's internship at Microsoft Research.} \quad Navin Goyal$^\spadesuit$\\
	$^\spadesuit$ Microsoft Research India\\
	$^\diamondsuit$ Udaan.com\\
	{\tt \small \{t-satbh,navingo\}@microsoft.com} \\
	{\tt \small kabir.ahuja@udaan.com} \\
}

\date{}

\begin{document}
\maketitle

\begin{abstract}
Transformers have supplanted recurrent models in a large number of NLP tasks. However, the differences in their abilities to model different syntactic properties remain largely unknown. Past works suggest that LSTMs generalize very well on regular languages and have close connections with counter languages. In this work, we systematically study the ability of Transformers to model such languages as well as the role of its individual components in doing so. We first provide a construction of Transformers for a subclass of counter languages, including well-studied languages such as $n$-ary Boolean Expressions, Dyck-1, and its generalizations. In experiments, we find that Transformers do well on this subclass, and their learned mechanism strongly correlates with our construction. Perhaps surprisingly, in contrast to LSTMs, Transformers do well only on a subset of regular languages with degrading performance as we make languages more complex according to a well-known measure of complexity. Our analysis also provides insights on the role of self-attention mechanism in modeling certain behaviors and the influence of positional encoding schemes on the learning and generalization abilities of the model. 
\end{abstract}

\section{Introduction}


 Transformer \cite{vaswani2017attention} is a self-attention based architecture which has led to state-of-the-art results across various NLP tasks \cite{devlin-etal-2019-bert,liu2019roberta,radford2018improving}. Much effort has been devoted to understand the inner workings and intermediate representations of pre-trained models; \citet{rogers2020primer} is a recent survey. However, our understanding of their practical ability to model different behaviors relevant to sequence modeling is still nascent.

On the other hand, a long line of research has sought to understand the capabilities of recurrent neural models such as the LSTMs \cite{hochreiter1997long} . Recently, \citet{weiss2018practical}, \citet{suzgun2019lstm} showed that LSTMs are capable of recognizing counter languages such as \dyck{} and $\Ctwo$ by learning to perform counting like behavior. \citet{suzgun2019lstm} showed that LSTMs can recognize shuffles of multiple \dyck{} languages, also known as \shuff{}. Since Transformer based models (e.g., GPT-2 and BERT) are not equipped with recurrence and start computation from scratch at each step, they are incapable of directly maintaining a counter. 
Moreover, it is known that theoretically RNNs can recognize any regular language in finite precision, and LSTMs work well for this task in practical settings. However, Transformer's ability to model such properties in practical settings remains an open question.

\begin{figure}[t]
 \centering
	\includegraphics[scale=0.36]{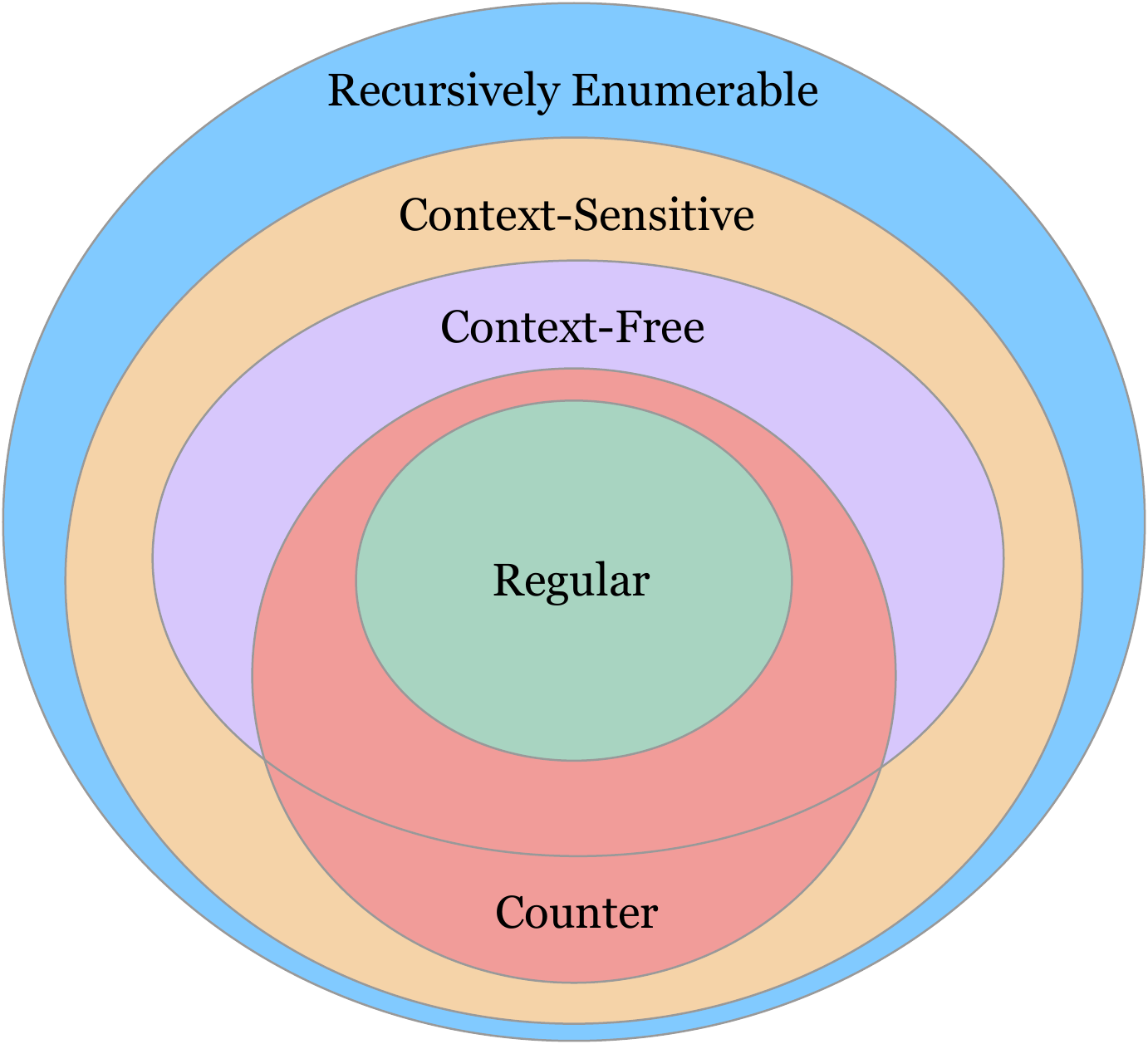}
	\caption{\label{fig:intro} Counter languages form a strict superset of regular languages, and are a strict subset of context-sensitive languages. Counter and context-free languages have a nonempty intersection and neither set is contained in the other.}
\end{figure}



%

Prior to the current dominance of Transformers for NLP tasks, recurrent models like RNN-based models such as LSTMs were the most common choice, and their computational capabilities have been studied for decades, e.g., \cite{kolen2001field}.  In this work, we investigate the ability of Transformers to express, learn, and generalize on certain counter and regular languages. Formal languages provide us a controlled setting to study a network's ability to model different syntactic properties in isolation and the role of its individual components in doing so. 

Recent work has demonstrated close connections between LSTMs and counter automata. Hence, we seek to understand the capabilities of Transformers to model languages for which the abilities of LSTMs are well understood. We first show that Transformers are expressive enough to recognize certain counter languages like Shuffle-Dyck and $n$-ary Boolean Expressions by using self-attention mechanism to implement the relevant counter operations in an indirect manner. We then extensively evaluate the model's learning and generalization abilities on such counter languages and find that models generalize well on such languages. Visualizing the intermediate representations of these models shows strong correlations with our proposed construction. Although Transformers can generalize well on some popularly used counter languages, we observe that they are limited in their ability to recognize others. We find a clear contrast between the performance of Transformers and LSTMs on regular languages (a subclass of counter languages). Our results indicate that, in contrast to LSTMs, Transformers achieve limited performance on languages that involve modeling periodicity, modular counting, and even simpler star-free variants of \dyck{}, which they were able to recognize effortlessly. Our analysis provides insights about the significance of different components, namely self-attention, positional encoding, and the number of layers. Our results also show that positional masking and positional encoding can both aid in generalization and training, but in different ways. We conduct extensive experiments on over 25 carefully chosen formal languages. Our results are perhaps the first indication of the limitations of Transformers for practical-sized problems that are, in a precise sense, very simple, and in particular, easy for recurrent models.

\section{Related Work}\label{sec:relwork}

Numerous works, e.g., \citet{suzgun2018evaluating,sennhauser-berwick-2018-evaluating,skachkova-etal-2018-closing}, have attempted to understand the capabilities and inner workings of recurrent models by empirically analyzing them on formal languages. \citet{weiss2018practical} showed that LSTMs are capable of simulating counter operations and explored their practical ability to recognize languages like $\Ctwo$ and $\Cthree$. \citet{suzgun2019lstm} further showed that LSTMs can learn to recognize \dyck{} and \shuff{} and can simulate the behavior of $k$-counter machines. Theoretical connections of recurrent models have been established with counter languages \cite{merrill2019sequential, merrill2020formal, merrill2020linguistic}. It has also been shown that RNN based models can recognize regular languages \cite{kolen2001field,korsky2019computational} and efforts have been made to extract DFAs from RNNs trained to recognize regular languages \cite{weiss2019learning, wang2018comparative, michalenko2019representing}. We are not aware of such studies for Transformers.

Recently, researchers have sought to empirically analyze different aspects of the Transformer trained on practical NLP tasks such as the information contained in intermediate layers \cite{rogers2020primer,reif2019visualizing,warstadt-etal-2019-investigating}.  \citet{voita-etal-2019-analyzing} studied the role of different types of attention heads. \citet{yang2019assessing, tsai2019transformer}  examined the ability of the model to learn order information via different positional encoding schemes. Complementary to these, our work is focused on analyzing Transformer's ability to model particular behaviors that could be relevant to modeling linguistic structure. Recently, it has been shown that Transformers are Turing-complete \cite{perez2019turing,bhattamishra2020computational} and are universal approximators of sequence-to-sequence functions given arbitrary precision \cite{yun2019transformers}. \citet{Hahn} shows that Transformers cannot recognize languages Parity and Dyck-2. However, these results only apply to very long words, and their applicability to practical-sized inputs is not clear (indeed, we will see different behavior for practical-sized input). Moreover, these results concern the expressive power of Transformers and do not apply to learning and generalization abilities. Thus Transformers' ability to model formal languages requires further investigation.



\section{Definitions}\label{sec:def}
We consider the Transformer as used in popular pre-trained LM models such as BERT and GPT, which is the encoder-only model of the original seq-to-seq architecture \cite{vaswani2017attention}. The encoder consists of multiple layers with two blocks each: (1) self-attention block, (2) a feed-forward network (FFN). For $1 \leq i \leq n$, at the $i$-th step, the model takes as input the sequence $s_1, s_2, \ldots, s_i$ where $s \in \Sigma$ and generates the output vector $\vy_i$. 
Each input $s_i$ is first converted into an embedding vector using the function $f_e: \Sigma \to \br^{d_{\mathrm{model}}}$ and usually some form of positional encoding is added to yield the final input vector $\vx_i$.  The embedding dimension $d_{\mathrm{model}}$ is also the dimension of intermediate vectors of the network.  Let $\mX_i := (\vx_1, \ldots, \vx_i)$ for $i \geq 1$.

 In the self-attention block, the input vectors undergo linear transformations $Q(\cdot), K(\cdot),\text{ and } V(\cdot)$ yielding the corresponding query, key and value vectors, respectively.
 The self-attention mechanism takes as input a \emph{query} vector
 $Q(\vx_i)$, \emph{key} vectors $K(\mX_i)$, and \emph{value} vectors $V(\mX_i)$. 
 An attention-head denoted by $\Att(Q(\vx_i), K(\mX_i), V(\mX_i))$, is a vector $\va_i = \sum_{j=1}^{i}\alpha_j \vv_j$, where $(\alpha_1, \ldots , \alpha_i) = \softmax(\<Q(\vx_i), K(\vx_1) \>, \ldots , \< Q(\vx_i), K(\vx_i)\>)$.

The output of a layer denoted by $\vz_i$ is computed by $\vz_i = O(\va_i)$ where $1 \leq i \leq n$ and $O(\cdot)$ typically denotes an FFN with $\mathsf{ReLU}$ activation. The complete $L$-layer model is a repeated application of the single-layer model described above, which produces a vector $\vz_i^{L}$ at its final layer where $L$ denotes the last layer. The final output is obtained by applying a projection layer with some normalization or an FFN over the vectors $\vz_i^{L}$'s and is denoted by $\vy_i=F(\vz_i^{L})$.  Residual connections and layer normalization are also applied to aid the learning process of the network.

In an LM setting, when the Transformer processes the input sequentially, each input symbol can only attend over itself and the previous inputs, masking is applied over the inputs following it. Note that, providing positional information in this form via masked self-attention is also referred to as positional masking \cite{vaswani2017attention,Shen2018DiSANDS}. A Transformer model without positional encoding and positional masking is order-insensitive. 

\subsection{Formal Languages}\label{sec:formaldef}
Formal languages are abstract models of the syntax of programming and natural languages; they also relate to cognitive linguistics, e.g., \citet{Jaeger-Rogers, Hahn} and references therein. 

\noindent \textbf{Counter Languages.} These are languages recognized by a deterministic counter automaton (DCA), that is, a DFA with a finite number of unbounded counters \cite{fischer1968counter}. The counters can be incremented/decremented by constant values and can be reset to $0$ 
(details in App. \ref{subsec:counter_automata}). The commonly used counter languages to study sequence models are \dyck{}, $\Ctwo$, and $\Cthree$.  Several works have explored the ability of recurrent models to recognize these languages as well as their underlying mechanism to do so. We include them in our analysis as well as some general form of counter languages such as \shuff{} (as used in \citet{suzgun2019lstm}) and $n$-ary Boolean Expressions. The language Dyck-1 over alphabet $\Sigma =\{[,]\}$ consists of balanced parentheses defined by derivation rules $S \rightarrow [ \; S \; ]\; |\; SS \;| \; \epsilon$. \shuff{} is a family of languages containing shuffles of Dyck-1 languages. Shuffle-$k$ denotes the shuffle of $k$ Dyck-1 languages: 
it contains $k$ different types of brackets, where each type of bracket is required to be well-balanced, but their relative order is unconstrained. For instance, a Shuffle-2 language over alphabet $\Sigma= \{[, ], (,  )\}$ contains the words $([)]$ and $[((]))$ but not $])[($. We also consider $n$-ary Boolean Expressions (hereby \bool{}-$n$), which are a family of languages of valid Boolean expressions (in the prefix notation) parameterized by the number of operators and their individual arities. For instance, an expression with unary operator $\sim$ and binary operator $\land$ contains the word `$\land \sim01$' but not `$\sim 10$' (formal definitions in App. \ref{sec:definitions}).

Note that, although languages such as \dyck{} and $\Ctwo$ are context-free, a DCA with a single counter is sufficient to recognize \dyck{} and $\Ctwo$. Similarly, a DCA with two single-turn counters can recognize $\Cthree$. On the other hand, recognizing \shuff{} requires multiple multi-turn counters, where for a given type of bracket, its corresponding counter is incremented or decremented by 1. Hence, it represents a more general form of counter languages. Similarly, recognizing 
 \bool{} requires a $1$-counter DCA with the counter updates depending on the operator: a ternary operator will increment the counter by $2$ ($=\mathrm{arity}-1$) whereas a unary operator will increment it by $0$. Figure \ref{fig:intro} shows the relationship between counter languages and other classes of formal languages.

 \noindent \textbf{Regular Languages.} Regular languages, perhaps the best studied class of formal languages, form a subclass of counter languages\footnote{For simplicity, from now on, we will refer to a particular language as a counter language if a DCA with a nonzero number of counters is necessary to recognize it, else we will refer to it as a regular language.}. They neatly divide into two subclasses: \textit{star-free} and 
 \textit{non-star-free}. Star-free languages can be described by regular expressions formed by union, intersection, complementation, and concatenation operators but not the Kleene star ($*$). 
 Like regular languages, star-free languages are surprisingly rich with algebraic, logical, and multiple other characterizations and continue to be actively researched, e.g., \cite{McNaughton-Papert, Jaeger-Rogers}. They form a simpler subclass of regular languages where the notion of simplicity can be made precise in various ways, e.g. they are first-order logic definable and cannot represent languages that require modular counting.

 We first consider Tomita grammars containing 7 regular languages representable by DFAs of small sizes, a popular benchmark for evaluating recurrent models and extracting DFA from trained recurrent models (see, e.g., \citet{Comparative-Giles}). Tomita grammars contain both star-free and non-star-free languages.  We further investigate some non-star-free languages such as $(aa)^*$, Parity and $(abab)^*$. Parity contains words over $\{0,1\}$ with an even number of $1$'s. Similarly $(aa)^*$ and $(abab)^*$ require modeling periodicity. 
 
 On the other hand, the seemingly similar looking language $(ab)^*$ is star-free: $(ab)^* = (b\emptyset^c + \emptyset^c a + \emptyset^c aa \emptyset^c + \emptyset^c bb \emptyset^c)^c$, where $\cdot^c$ denotes set complementation, and thus $\emptyset^c = \Sigma^*$. The dot-depth of a star-free language is a measure of nested concatenation or sequentiality required in a star-free regular expression 
 (formal definition in App. \ref{subsec:dot-depth}). We define a family $\dn{0}, \dn{1}, \ldots  $ of star-free languages. For $n \geq 0$, the language $\dn{n}$ over $\Sigma=\{a,b\}$ is defined inductively as follows: $\dn{n} = (a\dn{n-1}b)^*$ where $\dn{0} = \epsilon$, the empty word. Thus $\dn{1}= (ab)^*$ and $\dn{2} = (a(ab)^*b)^*$. Language $\dn{n}$ is known to have dot-depth $n$.
 
 The list of all considered languages and their definitions are provided in the App. \ref{sec:definitions}.

\section{Expressiveness Results}\label{sec:const}

\begin{proposition}
	There exists a Transformer as defined in Section \ref{sec:def} that can recognize the family of languages Shuffle-Dyck.
\end{proposition}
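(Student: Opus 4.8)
The plan is to exhibit an explicit one-layer, single-head encoder that implements, in parallel over the $k$ bracket types, the same balanced-count bookkeeping an LSTM uses for Dyck-1, but routed through uniform self-attention rather than recurrence. Index the $k$ bracket types by $t\in\{1,\dots,k\}$ and write $a_t,b_t$ for the opening and closing symbol of type $t$. First I would fix the embedding $f_e:\Sigma\to\br^{d_{\mathrm{model}}}$ (with $d_{\mathrm{model}}\ge k$) that sends $a_t$ to the $t$-th standard basis vector $e_t$ and $b_t$ to $-e_t$, all other coordinates zero; no positional encoding is added, and only positional masking is used (the LM setting of Section~\ref{sec:def}), so that step $i$ attends over $s_1,\dots,s_i$.

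Second, I would make the single attention head return the average of its value vectors: taking the query and key maps to be identically zero makes every score $\langle Q(\vx_i),K(\vx_j)\rangle$ equal to $0$, so $\softmax$ outputs exactly the uniform weights $(1/i,\dots,1/i)$ and $\va_i=\Att(Q(\vx_i),K(\mX_i),V(\mX_i))=\tfrac1i\sum_{j=1}^{i}\vv_j$. Letting $V$ be the projection onto the first $k$ coordinates, the $t$-th entry of $\va_i$ equals $\tfrac1i\big(c^+_t(i)-c^-_t(i)\big)$, where $c^+_t(i)$ and $c^-_t(i)$ count the $a_t$ and $b_t$ symbols among $s_1,\dots,s_i$.

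Third, I would decode membership using the ReLU networks the layer already provides — the post-attention FFN $O$ and the final map $F$. Because $1/i>0$, the relevant information is the signs and the zero-pattern of $\va_i$. Setting
\[
p_i=\sum_{t=1}^{k}\mathsf{ReLU}\!\big(-(\va_i)_t\big),\qquad q_i=\sum_{t=1}^{k}\big(\mathsf{ReLU}((\va_i)_t)+\mathsf{ReLU}(-(\va_i)_t)\big),
\]
one gets $p_i=0$ iff $s_1\cdots s_i$ is a legal prefix of a Shuffle-$k$ word (every type has seen at least as many openings as closings), and $q_i=0$ iff in addition every type is balanced. Emitting $\vy_i=(p_i,q_i)$ recognizes the language in the real-time/streaming sense customary for these models: $w=s_1\cdots s_n\in$ Shuffle-$k$ exactly when $p_i=0$ for all $i\le n$ and $q_n=0$, the decision boundary being the natural one at $0$. (If a single end-of-word verdict is wanted, a second layer with uniform attention over $(p_1,\dots,p_n)$ returns $\tfrac1n\sum_i p_i$, which vanishes iff every prefix was legal.) Finally I would note that residual connections and layer normalization can be neutralized by standard devices — reserving a subspace for the $k$ counting coordinates, or simply working without layer normalization as in related constructions — so the whole network conforms to Section~\ref{sec:def}.

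I expect the main obstacle to be conceptual rather than computational. Self-attention can only form a convex combination of value vectors — here, the signed counts divided by the prefix length $i$ — whereas a counter machine manipulates the raw counts; the point to get right is that membership in Shuffle-$k$ depends only on the signs and the zero-sets of those signed counts, which are invariant under the positive rescaling by $1/i$, so the averaging intrinsic to attention costs nothing. Checking that constant query/key maps give \emph{exactly} uniform attention rather than merely an approximation, and that residuals and layer normalization can be neutralized, are the supporting technical points; the ReLU thresholding itself is routine.
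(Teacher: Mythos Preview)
Your proposal is correct and follows essentially the same approach as the paper: zero out keys so attention is uniform, average signed bracket-type indicators to get $\tfrac{1}{i}(c_t^+(i)-c_t^-(i))$, and use $\mathsf{ReLU}$ to read off signs for the prefix and balance conditions. The only cosmetic difference is that the paper stores both the count and its negation explicitly by using $d_{\mathrm{model}}=2k$ (so a bare $\mathsf{ReLU}(\rmI\va_i)$ suffices), whereas you keep $d_{\mathrm{model}}\ge k$ and synthesize the negation inside the FFN weights; your remarks on neutralizing residual/layer-norm and on the optional second aggregation layer are sound supplements to the paper's argument.
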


\begin{proof}

	Let $s_1, s_2, \ldots, s_n$ denote a sequence $w \in $ \kshuff{} over the alphabet $\Sigma=\{[_0, \ldots, [_{k-1}, ]_0, \ldots,]_{k-1}\}$. The language Shuffle-$1$ is equivalent to \dyck{}. For any \kshuff{} language, consider a model with $d_{model}= 2k$, where the embedding function $f_e$ is defined as follows. For each type of open bracket $[_j$ where,$0 \leq j < k$, the vector $f_e([_j) $ has the value $+1$ and $-1$ at the indices $2j$ and $2j+1$, respectively. It has the value $0$ at the rest of the indices. Similarly for each closing bracket, the vector $f_e(]_j) $ has the value $-1$ and $+1 $ at the indices $2j$ and $2j+1$, and it has the value $0$ at the rest of the indices. For \dyck{}, this would lead to $f_e([ ) = [+1, -1]^T$ and $f_e(]) = [-1,+1]^T$ (with $d_{model}=2$). We use a single-layer Transformer where we set the matrix corresponding to linear transformation for key vectors to be null matrix, that is $K(\vx) = \vzero$ for all $\vx$. This will lead to equal attention weights for all inputs. The matrices corresponding to $Q(\cdot)$ and $V(\cdot)$ are set to Identity. Thus, $\Att(Q(\vx_i), K(\mX_i), V(\mX_i)) = \frac{1}{i}\sum_{j=1}^{i}\vv_j$	for $1 \leq i \leq n$. Hence, at the $i$-th step, the self-attention block produces a vector $\va_i$ which has the values $\frac{\cnt{[_j}-\cnt{]_j}}{i}$ at indices $2j$ and the values $\frac{\cnt{]_j}-\cnt{[_j}}{i}$ at indices $2j+1$, where $\cnt{s}$ denotes the number of occurrence of the symbol $s$. For instance, in \dyck{}, if in the first $i$ inputs, there are $\cnt{[}$ open brackets and $\cnt{]}$ closing brackets, then $\va_i = [\frac{\cnt{[}-\cnt{]}}{i}, \frac{\cnt{]}-\cnt{[}}{i}]^T$, where $i=\cnt{[}+\cnt{]}$. In $\va_i$, the value $\cnt{[}-\cnt{]}$ represents the depth 
	(difference between the number of open and closing brackets) of the \dyck{} word at index $i$. Hence, the first coordinate is the ratio of the depth of the \dyck{} word and its length at that index, while the other coordinate is its negative.
	
    We then apply a simple FFN with $\mathsf{ReLU}$ activation over the vector $\va_i$. The vector $\vz_i = \mathsf{ReLU}(\rmI\va_i)$. The even indices of the vector $\vz_i$ will be nonzero if the number of open brackets of the corresponding type is greater than the number of closing brackets. A similar statement holds for the odd indices. Thus, for a given word to be in \kshuff{}, the values at odd indices of the vector $\vz_i$ must never be nonzero, and the values of all coordinates must be zero at the last step to ensure the number of open and closing brackets are the same.

For an input sequence  $s_1, s_2, \ldots, s_n$, the model will produce $\vz_1, \ldots ,\vz_n$ based on the construction specified above. A word $w$ belongs to language \kshuff{} if $\vz_{i,2j+1} = 0$ for all $1 \leq i \leq n$, $0 \leq j <k$ and $\vz_{n} =\vzero$ and does not belong to the language otherwise. This can be easily implemented by an additional layer of self-attention and feedforward network to classify a given sequence.

\end{proof}

\begin{table*}[t]
	\scriptsize{\centering
		\begin{tabular}{P{12em}p{20em}P{7em}P{7em}P{7em}}
			\toprule
			\textbf{Language} & \textbf{Model} & \textbf{Bin-1 Accuracy [1, 50]}$\uparrow$ & \textbf{Bin-2 Accuracy [51, 100]}$\uparrow$ & \textbf{Bin-3 Accuracy [101, 150]}$\uparrow$ \\
			\midrule
			\multirow{5}{*}{\textbf{Shuffle-$2$}} & \textbf{LSTM (Baseline)} & \textbf{100.0}  & \textbf{100.0} & \textbf{100.0}\\
			\cmidrule{2-5}
			&\textbf{Transformer (Absolute Positional Encodings)} & \textbf{100.0} & 85.2 & 63.3 \\
			&\textbf{Transformer (Relative Positional Encodings)} & \textbf{100.0} & 51.6 & 3.8 \\
			&\textbf{Transformer (Only Positional Masking)} & \textbf{100.0} & \textbf{100.0} & 93.0 \\
			\midrule
			\multirow{5}{*}{\textbf{\bool{}-$3$}} & \textbf{LSTM (Baseline)} & \textbf{100.0}  & \textbf{100.0} & 99.7 \\
			\cmidrule{2-5}
			&\textbf{Transformer (Absolute Positional Encodings)} & \textbf{100.0} & 90.6 & 51.3 \\
			&\textbf{Transformer (Relative Positional Encodings)} & \textbf{100.0} & 96.0 & 68.4 \\
			&\textbf{Transformer (Only Positional Masking)} & \textbf{100.0} & \textbf{100.0} & \textbf{99.8} \\
			\midrule
			\multirow{5}{*}{\textbf{$\Cthree$}} & \textbf{LSTM (Baseline)} & \textbf{100.0}  & \textbf{100.0} & 97.8 \\
			\cmidrule{2-5}
			&\textbf{Transformer (Absolute Positional Encodings)} & \textbf{100.0} & 62.1 & 5.3 \\
			&\textbf{Transformer (Relative Positional Encodings)} & \textbf{100.0} & 31.3 & 22.0 \\
			&\textbf{Transformer (Only Positional Masking)} & \textbf{100.0} & \textbf{100.0} & \textbf{100.0} \\
			\midrule
		\end{tabular}
		\caption{\label{tab:results} The performance of Transformers and LSTMs on the respective counter languages. Refer to section \ref{sec:res_count} for details. Performance on other counter languages such as Shuffle-4 and Shuffle-6 are listed in Table \ref{tab:countresults} in appendix. }
	}
\end{table*}

The bottleneck for precision in the construction above is the calculation of values of the form $\frac{\cnt{[}-\cnt{]}}{i}$ in the vector $\va_i$. Since in a finite precision setting with $r$ bits, this can be computed up to a value exponential in $r$, our proof entails that Transformers can recognize languages in \shuff{} for lengths exponential in the number of bits. 

Using a similar logic, one can also show that Transformers can recognize the family of languages \bool{}-$n$ (refer to Lemma \ref{lem:bool}). By setting the value vectors according to the arities of the operators, the model can obtain the ratio of the counter value of the underlying automata and the length of the input at each step via self-attention. Although the above construction is specific to these language families, we provide a proof for a more general but restricted subclass of Counter Languages in the appendix (refer to Lemma \ref{lem:qscm}). The above construction serves to illustrate how Transformers can recognize such languages by indirectly doing relevant computations. As we will later see, this will also help us interpret how trained models recognize such languages.

\section{Experimental Setup}\label{sec:exp}

In our experiments, we consider $27$ formal languages belonging to different parts in the hierarchy of counter and regular languages. For each language, we generate samples within a fixed-length window for our training set and generate multiple validation sets with different windows of length to evaluate the model's generalization ability.

For most of the languages, we generate 10k samples for our training sets within lengths 1 to 50 and create different validation sets containing samples with distinct but contiguous windows of length. The number of samples in each validation set is 2k, and the width of each window is about 50. For languages that have very few positive examples in a given window of length, such as $(ab)^*$ and $\Cthree$, we train on all positive examples within the training window. Similarly, each validation set contains all possible strings of the language for a particular range. Table \ref{tab:stats} in appendix lists the dataset statistics of all $27$ formal languages we consider.\footnote{Our experimental setup closely follows the setup of \citet{suzgun2019lstm,suzgun2018evaluating} for RNNs}. We have made our source code available at \href{https://github.com/satwik77/Transformer-Formal-Languages}{https://github.com/satwik77/Transformer-Formal-Languages}.

\subsection{Training details}\label{sec:train_details}

We train the model on character prediction task as introduced in  \citet{gers2001lstm} and as used in \citet{suzgun2018evaluating,suzgun2019lstm}. Similar to an LM setup, the model is only presented with positive samples from the given language. For an input sequence $s_1, s_2, \ldots, s_n$, the model receives the sequence $s_1, \ldots, s_i$ for $1 \leq i \leq n$ at each step $i$ and the goal of the model is to predict the next set of legal/valid characters in the $(i+1)^{th}$ step. From here onwards, we say a model can recognize a language if it can perform the character prediction task perfectly.

The model assigns a probability to each character in the vocabulary of the language corresponding to its validity in the next time-step. The output can be represented by a $k$-hot vector where each coordinate corresponds to a character in the vocabulary of the language. The output is computed by applying a sigmoid activation over the scores assigned by the model for each character. Following \citet{suzgun2018evaluating,suzgun2019lstm}, the learning objective of the model is to minimize the mean-squared error between the predicted probabilities and the $k$-hot labels.\footnote{We also tried BCE loss in our initial experiments and found similar results for languages such as Parity, Tomita grammars and certain counter languages.} During inference, we use a threshold of 0.5 to obtain the predictions of the model. For a test sample, the model's prediction is considered to be correct if and only if its output at every step is correct. Note that, this is a relatively stringent metric as a correct prediction is obtained only when the output is correct at every step. The accuracy of the model over test samples is the fraction of total samples predicted correctly\footnote{A discussion on the choice of character prediction task and its relation to other tasks such as standard classification and LM is provided in section \ref{sec:cpr} in the appendix.}. Similar to \citet{suzgun2019lstm} we consider models of small sizes to prevent them from memorizing the training set and make it feasible to visualize the model. In our experiments, we consider Transformers with up to 4 layers, 4 heads and the dimension of the intermediate vectors within 2 to 32. We extensively tune the model across various hyperparameter settings. We also examine the influence of providing positional information in different ways such as absolute encodings, relative encodings \cite{dai-etal-2019-transformer} and using only positional masking without any explicit encodings.

\section{Results on Counter Languages}\label{sec:res_count}

We evaluated the performance of the model on $9$ counter languages. Table \ref{tab:results} shows the performance of different models described above on some representative languages. We also include the performance of  LSTMs as a baseline. We found that Transformers of small size (single head and single layer) can generalize well on some general form of counter languages such as \shuff{} and \booln{}. Surprisingly, we observed this behavior when the network was not provided any form of explicit positional encodings, and positional information was only available in the form of masking. For models with positional encoding, the lack of the ability to generalize to higher lengths could be attributed to the fact that the model has never been trained on some of the positional encodings that it receives at test time. On the other hand, the model without any explicit form of positional encoding is less susceptible to such issues if it is capable of performing the task and was found to generalize well across various hyperparameter settings.

\subsection{Role of Self-Attention}

\begin{figure}[t]
	\begin{subfigure}{.45\textwidth}
		\centering
		\includegraphics[scale=0.12]{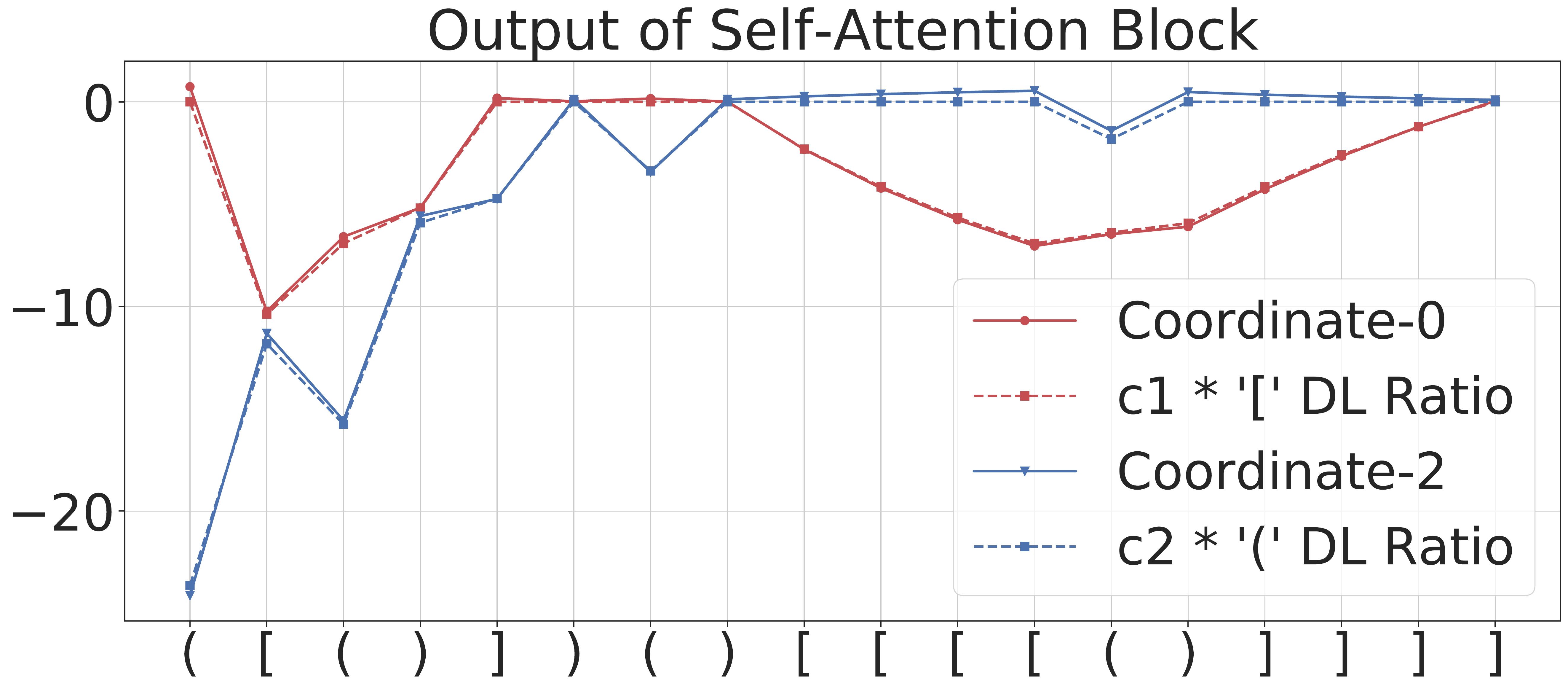}
		\caption{\label{fig:attn_out_shuff2} }
	\end{subfigure}
	\newline
	
	\begin{subfigure}{.45\textwidth}
		\centering
		\includegraphics[scale=0.12]{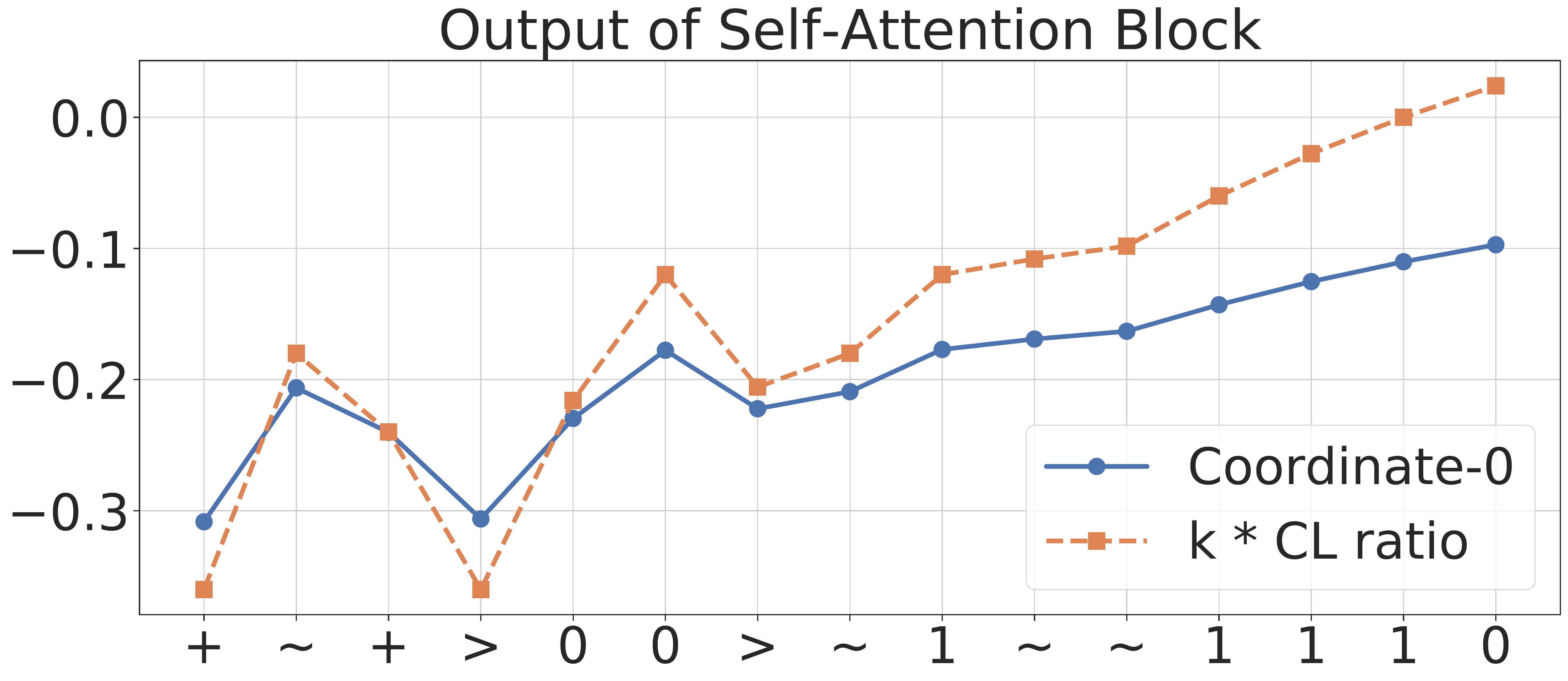}
		\caption{\label{fig:attn_out_bool} }
	\end{subfigure}
	\caption{\label{fig:attn_out} Values of different coordinates of the output of self-attention block of the models trained on Shuffle-2 and \bool-$3$. The dotted lines are the scaled depth to length ratios for Shuffle-2 and scaled counter value to length ratios for \bool-$3$. We observe a near perfect  Pearson correlation coefficent of 0.99 between outputs of self attention block and the DL and CL ratios.}
	
\end{figure}

In order to check our hypothesis in Sec. \ref{sec:const}, we visualize certain attributes of trained models that generalize well on Shuffle-2 and \bool-$3$.\footnote{We take the model with the smallest number of parameters that generalized well making it feasible for us to visualize it.}
Our construction in Sec. \ref{sec:const} recognizes sequences in \shuff{} by computing the depth to length ratio of the input at each step via self-attention mechanism. For \booln{}, the model can achieve the task similarly by computing the corresponding counter value divided by length (refer to Lemma \ref{lem:bool}). Interestingly, upon visualizing the outputs of the self-attention block for a model trained on Shuffle-2, we found a strong correlation of its elements with the depth to length ratio. As shown in Fig. \ref{fig:attn_out_shuff2}, different coordinates of the output vector of the self-attention block contain computations corresponding to different counters of the Shuffle-2 language. We observe the same behavior for models trained on Shuffle-4 language (refer to Figure \ref{fig:attn_out_shuff4} in appendix). Similarly, upon visualizing a model trained on Boolean Expressions with 3 operators, we found strong correlation\footnote{The Pearson correlation of values were $\sim0.99$ } between its elements and the ratio of the counter value and length of the input (refer to Figure \ref{fig:attn_out_bool}). This indicates that the model learns to recognize inputs by carrying out the required computation in an indirect manner, as described in our construction. Additionally, for both models, we found that the attention weights of the self-attention block were uniformly distributed (refer to Figure \ref{fig:attnwts} in appendix). Further, on inspecting the embedding and value vectors of the open and closing brackets, we found that their respective coordinates were opposite in sign and similar in magnitude. As opposed to \shuff{}, for \booln{}, the magnitudes of the elements in the value vectors were according to their corresponding arity. For instance, the magnitude for a ternary operator was (almost) thrice the magnitude for a unary operator (refer to Figure \ref{fig:values} in appendix). These observations are consistent with our construction, indicating that the model uses its value vectors to determine the counter updates and then at each step, aggregates all the values to obtain a form of the final counter value in an indirect manner. This is complementary to LSTMs, which can simulate the behavior of $k$-counters more directly by making respective updates to its cell states upon receiving each input \cite{suzgun2019lstm}.

\begin{table}[t]
	\scriptsize{\centering
		\begin{tabular}{P{9em}P{2.5em}P{2.5em}P{2.5em}P{2.5em}}
			\toprule
			&\multicolumn{2}{c}{\textbf{1-Layer}} &\multicolumn{2}{c}{\textbf{2-Layer}} \\ 
			\cmidrule(lr){2-3}\cmidrule(lr){4-5}
			\textbf{Model Type} & \textbf{Bin 0}& \textbf{Bin 1}& \textbf{Bin 0}& \textbf{Bin 1}\\
			\midrule
			Positional Masking & 45.1 & 38.9  &  100.0 & 99.2 \\
			Positional Encoding & 55.8 & 37.9  &  100.0 & 99.6  \\
			\midrule
			LSTM & 100.0 & 100.0 & 100.0 &  100.0\\	
			\bottomrule
		\end{tabular}
		\caption{\label{tab:reset}Results on language {\ResetDyck} with different number of layers.}
	}
\end{table}

\subsection{Limitations of the Single-Layer Transformer}
Although we observed that single-layer Transformers are easily able to recognize some of the popularly studied counter languages, at the same time, it is not necessarily true for counter languages that require reset operations. We define a variant of the \dyck{} language. Let {\ResetDyck} be the language defined over the alphabet $\Sigma=\{[,],\#\}$, where $\#$ denotes a symbol that resets the counter. Words in {\ResetDyck} have the form $\Sigma^*\#v$, where the string $v$ belongs to Dyck-1. When the machine encounters the reset symbol $\#$, it must ignore all the previous input, reset the counter to $0$ and go to start state. It is easy to show that this cannot be directly implemented with a single layer self-attention network with positional masking (Lemma \ref{lem:reset} in Appendix). The key limitation for both with and without encodings is the fact that for a single layer network the scoring function $\langle Q(\vx_n), K(\vx_{\#}) \rangle$ and the value vector corresponding to the reset symbol is independent of the preceding inputs which it is supposed to negate (reset). The same limitation does not hold for multi-layer networks where the value vector, as well as the scoring function for the reset symbol, are dependent on its preceding inputs. On evaluating the model on data generated from such a language, we found that single-layer networks are unable to perform well in contrast to networks with two layers 
(Table \ref{tab:reset})\footnote{The results and limitations of single-layer Transformers are confined to this subsection. The rest of the results in the paper are not specific to single-layer Transformers unless explicitly mentioned.}. LSTMs, on the other hand, can emulate the reset operation using forget gate.

\section{Results on Regular Languages}\label{sec:res_reg}

\begin{table}[t]
	\scriptsize{\centering
		\begin{tabular}{P{6em}P{3em}P{3em}P{3em}P{3em}P{3em}}
			\toprule
			&&\multicolumn{2}{c}{\textbf{Transformer}} &\multicolumn{2}{c}{\textbf{LSTM}} \\ 
			\cmidrule(lr){3-4}\cmidrule(lr){5-6}
			\textbf{Language} & \textbf{Star-Free} & \textbf{Bin 0}& \textbf{Bin 1}& \textbf{Bin 0}& \textbf{Bin 1}\\
			\midrule
			Tomita 1 & \cmark& 100.0 & 100.0  & 100.0 & 100.0 \\
			Tomita 2 & \cmark & 100.0 & 100.0  & 100.0 & 100.0 \\
			Tomita 3 & \xmark& 75.4 & 10.8  &  100.0 & 100.0 \\
			Tomita 4 & \cmark& 100.0 & 92.4  &  100.0 & 100.0 \\
			Tomita 5 & \xmark &29.3 & 0.0  &  100.0 & 100.0 \\
			Tomita 6 & \xmark & 88.8 & 0.0  &  100.0 & 100.0 \\
			Tomita 7 & \cmark & 100.0 & 100.0  &  100.0 & 100.0 \\									
			\bottomrule
		\end{tabular}
		\caption{\label{tab:tomita}Results on Tomita grammar }
	}
\end{table}

We first examine the popular benchmark of Tomita grammars. While the LSTMs generalize perfectly on all 7 languages, Transformers are unable to generalize on 3 languages, all of which are non-star-free. Note that, all star-free languages in Tomita grammar have dot-depth 1. Recognizing non-star-free languages requires modeling properties such as periodicity and modular counting. Consequently, we evaluate the model on some of the simplest non-star-free languages such as the languages $(aa)^{*}$ and Parity. We find that they consistently fail to learn or generalize on such languages, whereas LSTMs of very small sizes perform flawlessly. Table \ref{tab:nsfree} lists the performance on some non-star-free languages.  Note that LSTMs can easily recognize such simple non-star-free languages considered here by using its internal memory and recurrence \footnote{For tasks such as Parity, LSTMs can simply flip between two values in its hidden state upon receiving $1$'s as input and ignore when it receives $0$'s as input.}. However, doing the same task via self-attention mechanism without using any internal memory could be highly non-trivial and potentially impossible. Languages such as $(aa)^*$ and Parity are among the simplest non-star-free languages, and hence limitations in recognizing such languages carry over to a larger class of languages. 
The results above may suggest that the star-free languages are precisely the regular languages recognizable by Transformers. As we will see in the next section, this is not so.

\subsection{Necessity of Positional Encodings}

\begin{table}[t]
	\scriptsize{\centering
		\begin{tabular}{P{3em}P{3.5em}P{4.5em}P{4.5em}P{2.5em}P{2.5em}}
			\toprule
			&&\multicolumn{2}{c}{\textbf{Transformer}} &\multicolumn{2}{c}{\textbf{LSTM}} \\ 
			\cmidrule(lr){3-4}\cmidrule(lr){5-6}
			\textbf{Language} & \textbf{Property} & \textbf{Bin 0}& \textbf{Bin 1}& \textbf{Bin 0}& \textbf{Bin 1}\\
			\midrule
			Parity & non-SF&  68.7 (23.0) & 0.0 (0.0)  &  100.0 & 100.0 \\
			$(aa)^{*}$ & non-SF &  100 (1.3) & 0.0 (0.0)  &  100.0 & 100.0  \\
			$(abab)^{*}$ & non-SF& 100.0 (9.9) & 5.4 (0.0)  & 100.0 & 100.0 \\
			$\dn{1}$ & depth-1 &100.0 & 100.0  & 100.0 & 100.0 \\
			$\dn{2}$&depth-2&74.6&3.1&100.0&100.0\\
			$\dn{4}$ & depth -4 &90.2 & 3.3 & 100.0 & 100.0\\
			\bottomrule
		\end{tabular}
		\caption{\label{tab:nsfree}Results on non-star-free languages (non-SF) and the language $\dn{n}$. The values in parenthesis correspond to the scores obtained for a model without residual connections. This is to prevent the model from solving the task by memorizing the positional encodings and study the ability of self-attention mechanism to solve the task. }
	}
\end{table}

The architecture of Transformer imposes limitations for recognizing certain types of languages. Although Transformers seem to generalize well when they are capable of performing a task with only positional masking, they are incapable of recognizing certain types of languages without explicit positional encodings. We consider the
family of star-free languages $\dn{n}$ defined in Sec. \ref{sec:formaldef}. Note that the task of recognizing $\dn{n}$ is equivalent to recognizing \dyck{} with maximum depth $n$, where the symbols $a$ and $b$ in $\dn{n}$ are analogous to open and closing brackets in \dyck{} respectively. The primary difference between recognizing $\dn{n}$ and \dyck{} is that in case of $\dn{n}$, when the input reaches the maximum depth $n$, the model must predict $a$ (the open bracket) as invalid for the next character, whereas in \dyck{}, open brackets are always allowed. We show that although Transformers with only positional masking can generalize well on \dyck{}, they are incapable of recognizing the language $\dn{n}$ for $n>1$. The limitation arises from the fact that when the model receives a sequence of only $a$'s, then due to the softmax based aggregation, the output of the self-attention block $\va_i$ will be a constant vector, implying that the output of the feed-forward will also be a constant vector, that is, $\vz_1 = \vz_2=\ldots=\vz_n$. In case of languages such as $\dn{n}$, if the input begins with $n$ consecutive $a$s, then, since the model cannot distinguish between the $n$-th $a$ and the preceding $a$'s, the model cannot recognize the language $\dn{n}$. This limitation does not exist if the model is provided explicit positional encoding. Upon evaluating Transformers with positional encodings on instances of the language $\dn{n}$, we found that the models are able to generalize to a certain extent on strings within the same lengths as seen during training but fail to generalize on higher lengths (Table \ref{tab:nsfree}). It is perhaps surprising that small and simpler self-attention networks can generalize very well on languages such as \dyck{} but achieve limited performance on a language that belongs to a much simpler class such as star-free.

Similarly, since $(aa)^{*}$, is a unary language (alphabet size is $1$), the model will always receive the same character at each step. Hence, for a model with only positional masking, the output vector will be the same at every step, making it incapable of recognizing the language $(aa)^*$.  For the language Parity, when the input word contains only $1$'s, the task reduces to recognizing $(11)^*$ and hence a model without positional encodings is incapable of recognizing Parity even for very small lengths regardless of the size of the network (refer to Lemma \ref{lem:reg_pos}). We find it surprising that for 
Parity, which is permutation invariant, positional encodings are necessary for transformers to recognize them even for very small lengths.

\begin{table}[t]
	\scriptsize {\centering
		\begin{tabular}{P{9em}P{2.5em}P{2.5em}P{2.5em}P{2.5em}}
			\toprule
			&\multicolumn{2}{c}{\textbf{$(aa)^*$}} &\multicolumn{2}{c}{\textbf{$(aaaa)^*$}} \\ 
			\cmidrule(lr){2-3}\cmidrule(lr){4-5}
			\textbf{Encoding Scheme} & \textbf{Bin 0}& \textbf{Bin 1}& \textbf{Bin 0}& \textbf{Bin 1}\\
			\midrule
			Positional Masking & 0.0 & 0.0  &  0.0 & 0.0 \\
			Absolute Encoding & 1.3 & 0.0  &  6.7 & 0.0  \\
			Relative Encoding & 0.6 & 0.0  & 1.7 & 0.0 \\
			$\textsf{cos}(n\pi)$ & 100.0 & 100.0 & 0.0 &  0.0\\
			Trainable Embedding & 100.0 & 0.0 & 100.0  & 0.0 \\		
			\bottomrule
		\end{tabular}
		
		\caption{\label{tab:posit} Performance of transformer based models on $(aa)^*$ and $(aaaa)^*$, for different types of position encoding schemes. To separately study the effect of different position encodings on the self attention mechanism, we do not include residual connections in the models studied here.}
	}
\end{table}

\subsection{Influence of Custom Positional Encodings}
The capability and complexity of the network could significantly depend on the positional encoding scheme. For instance, for language $(aa)^*$, the ability of a self-attention network to recognize it depends solely on the positional encoding. Upon evaluating with standard absolute and relative encoding schemes, we observe that the model is unable to learn or generalize well. At the same time, it is easy to show that if $\textsf{cos}(n\pi)$, which has a period of two is used as positional encoding, the self-attention mechanism can easily achieve the task which we also observe when we empirically evaluated with such an encoding. However, the same encoding would not work for a language such as $(aaaa)^*$, which has a periodicity of four.  Table \ref{tab:posit} shows the performance of the model with different types of encodings. When we used fixed-length trainable positional embeddings, the obtained learned embeddings were very similar to the $\textsf{cos}(n\pi)$ form; however, such embeddings cannot be used for sequences of higher lengths. This also raises the need for better learnable encodings schemes that can extrapolate to variable lengths of inputs not seen during training data such as \cite{liu2020learning}.

Our experiments on over 15 regular languages seem to indicate that Transformers are able to generalize on star-free languages within dot-depth $1$ but have difficulty with higher dot-depths or more complex classes like non-star-free languages. Table \ref{tab:regresults} in Appendix lists results on all considered regular languages.

\section{Discussion}\label{sec:discussion}

We showed that Transformers can easily generalize on certain counter languages such as Shuffle-Dyck and Boolean Expressions in a manner similar to our proposed construction. Our visualizations imply that Transformers do so with a generalizable mechanism instead of overfitting on some statistical regularities. Similar to natural languages, Boolean Expressions consist of recursively nested hierarchical constituents. Recently, \citet{papadimitriou2020pretraining} showed that pretraining LSTMs on formal languages like Shuffle-Dyck transfers to LM performance on natural languages. At the same time, our results show clear limitations of Transformers compared to LSTMs on a large class of regular languages. Evidently, the performance and capabilities of Transformers heavily depend on architectural constituents e.g., the positional encoding schemes and the number of layers. Recurrent models have a more automata-like structure well-suited for counter and regular languages, whereas self-attention networks' structure is very different, which seems to limit their abilities for the considered tasks. 

Our work poses a number of open questions. Our results are consistent with the hypothesis that Transformers  generalize well for star-free languages with dot-depth 1, but not for higher depths. Clarifying this hypothesis theoretically and empirically is an attractive challenge. What does the disparity between the performance of Transformers on natural and formal languages indicate about the complexity of natural languages and their relation to linguistic analysis? (See also \citet{Hahn}). Another interesting direction would be to understand whether certain modifications or recently proposed variants of Transformers improve their performance on formal languages. Regular and counter languages model some aspects of natural language while context-free languages model other aspects such as hierarchical dependencies. Although our results have some implications on them, we leave a detailed study on context-free languages for future work.

\section*{Acknowledgements}
We thank the anonymous reviewers for their constructive comments and suggestions. We would also like to thank our colleagues at Microsoft Research and Michael Hahn for their valuable feedback and helpful discussions.

\bibliography{emnlp2020}
\bibliographystyle{acl_natbib}

\newpage
\clearpage
\appendix

\section{Roadmap}
 The appendix is organized as follows. In section \ref{sec:definitions} we first provide formal definitions of the key languages used in our investigation in the main paper. In sections \ref{subsec:counter_automata} and \ref{subsec:dot-depth}, we also provide the formal definitions of automata, star-free languages and the dot-depth hierarchy. In section \ref{sec:expr_res}, we provide the details of all our expressiveness results. Section \ref{sec:exp_details} contains the details of our experimental setup which could be relevant for reproducibility of the results and includes a thorough discussion of the choice of character prediction task. The list of all the formal languages we have considered, their dataset statistics as well as the results are provided in section \ref{sec:exp_details}.

\section{Definitions}\label{sec:definitions}
In this section, we provide formal definitions of some of the languages used in our analysis. In counter languages, we first define the family of shuffled Dyck-1 languages. The language Dyck-1 is a simple context-free language that can also be recognized by a counter automaton with a single counter. We generate the data for Dyck-1 based on the following PCFG,

\begin{align*}
S \rightarrow \begin{cases} 
( S ) & \text{with probability } p \\
SS & \text{with probability } q \\ 
\varepsilon & \text{with probability } 1 - (p+q) 
\end{cases}
\end{align*}
\noindent where $0 < p, q < 1$ and $(p+q) < 1$. We use $0.5$ as the value of $p$ and $0.25$ as the value for $q$. 

\noindent \textbf{Shuffle-Dyck.} We now define the Shuffle-Dyck language introduced and described in \cite{suzgun2019lstm}. We first define the shuffling operation formally. The shuffling operation $||: \Sigma^{*} \times \Sigma^{*} \to \mathcal{P} (\Sigma^{*})$ can be inductively defined as follows:%
\footnote{We abuse notation by allowing a string to stand for the singleton containing that string. $\epsilon$ is the empty string.}
\vspace{-0.2em}
\begin{itemize}
	\setlength\itemsep{-0.2em}
	\item $u || \varepsilon = \varepsilon || u = \{u\}$
	\item $\alpha u || \beta v = \alpha (u || \beta v) \cup \beta (\alpha u || v)$
\end{itemize}
\noindent for any $\alpha, \beta \in \Sigma$ and $u, v \in \Sigma^{*}$. For instance, the shuffle of $ab$ and $cd$ is
\vspace{-.2em}
\begin{align*}
ab || cd = \{abcd, acbd, acdb, cabd, cadb, cdab\}.
\end{align*}
There is a natural extension of the shuffling operation $||$ to languages.  The \textit{shuffle} of two languages $\mathcal{L}_1$ and $\mathcal{L}_2$, denoted $\mathcal{L}_1 || \mathcal{L}_2$, is the set of all possible interleavings of the elements of $\mathcal{L}_1$ and $\mathcal{L}_2$, respectively, that is:
\begin{align*}
\mathcal{L}_1 || \mathcal{L}_2 = \bigcup_{\substack{u \in \mathcal{L}_1, \ v \in \mathcal{L}_2}} u || v
\end{align*}
Given a language $\mathcal{L}$, we define its self-shuffling $\mathcal{L}||^2$ to be $\mathcal{L} || \sigma(\mathcal{L})$, where $\sigma$ is an isomorphism on the vocabulary of $\mathcal{L}$ to a disjoint vocabulary. More generally, we define the $k$-self-shuffle \[\mathcal{L}||^k = \left\{
\begin{array}{ll}
\{\varepsilon\}  & \mbox{if $k=0$}  \\
\mathcal{L} || \sigma(\mathcal{L}||^{k-1})  & \mbox{otherwise}\eqpunc{.}
\end{array}\right.\]

We use Shuffle-$k$ to denote the shuffle of $k$ Dyck-$1$ languages (Dyck-$1||^{k}$) each with its own brackets. Shuffle-1 is the same as Dyck-1. For instance the language Shuffle-$2$ is the shuffle of Dyck-$1$ over alphabet $\Sigma=\{(,)\}$ and another Dyck-1 over the alphabet $\Sigma=\{[,]\}$. Hence the resulting Shuffle-2 language is defined over alphabet $\Sigma= \{[, ], (,  )\}$ and contains words such as $([)]$ and $[((]))$ but not $])[($. This is different from the context-free language Dyck-2 in which $([])$ belongs to the language but $([)]$ does not. Similar to \cite{suzgun2019lstm} we generate the training data by generating sequence for Dyck-$n$ but by providing the correct target values for the character prediction task.

\noindent \textbf{$n$-ary Boolean Expressions.} We now define the family of languages $n$-ary Boolean Expressions parameterized by the number and arities of its operators. An instance of the language contains operators of different arities and as shown in \cite{fischer1968counter}, these languages can be recognized by counter-machines with a single counter. However as opposed to Dyck-1 the values with which the counters will be incremented or decremented will depend on the arity of its operator. A language with $n$ operators can be defined by the following derivation rules

\begin{verbatim}
<exp> -> <VALUE>
<exp> -> <UNARY> <exp>
<exp> -> <BINARY> <exp> <exp>
..
<exp> -> <n-ARY> <exp> .. <exp>
\end{verbatim}

\noindent \textbf{Tomita Grammars} Tomita Grammars are 7 regular langauges defined on the alphabet $\Sigma = \{0, 1\}$. Tomita-1 has the regular expression $1^*$ i.e. the strings containing only 1's and no 0s are allowed. Tomita-2 is defined by the regular expression $(10)^*$. Tomita-3 accepts the strings where odd number of consecutive 1s are always followed by an even number of 0s. Tomita-4 accepts the strings that do not contain 3 consecutive 0s. In Tomita-5 only the strings containing an even number of 0s and even number of 1s are allowed. In Tomita-6 the difference in the number of 1s and 0s should be divisible by 3 and finally, Tomita-7  has the regular expression $0^*1^*0^*1^*$.

We note that Tomita 2 $= \dn{1} = (01)^*$ and that the very simple language $\{0,1,2\}^*02^*$ has dot-depth 2 \cite{Cohen-Brzozowski}.

\subsection{Counter Automata}\label{subsec:counter_automata}

We define the general counter machine following \cite{fischer1968counter}. We are concerned with real-time counter machines here in which the number of computation steps is bounded by the number of inputs similar to how we use sequence models in practice. The machine has a finite number of unbounded counters and it modifies it by adding or subtracting values or resetting the counter value to $0$. For $m \in \mathbb{Z}$, let ${+}m$ denote the function 
$x \mapsto x+m$. Let $\times 0$ denote the constant zero function $x \mapsto 0$.


\begin{definition}[General counter machine \cite{fischer1968counter}]
	A $k$-counter machine is a tuple $\langle \Sigma, Q, q_0, u, \delta, F \rangle$ with
	\begin{enumerate}
		\item A finite alphabet $\Sigma$
		\item A finite set of states $Q$
		\item An initial state $q_0$
		\item A counter update function
		\begin{align*}
		u : \Sigma \times Q \times \{0, 1\}^k  \rightarrow \\
		\big( \{+m : m \in \mathbb{Z} \} \cup \{ \times 0 \} \big)^k
		\end{align*}
		\item A state transition function
		\begin{equation*}
		\delta : \Sigma \times Q \times \{0, 1\}^k \rightarrow Q
		\end{equation*}
		\item An acceptance mask
		\begin{equation*}
		F \subseteq Q \times \{0, 1\}^k
		\end{equation*}
	\end{enumerate}
\end{definition}

A machine processes an input string $x$ one token at a time.
For each token, we use $u$ to update the counters and $\delta$ to update the state according to the current input token, the current state, and a finite mask of the current counter values. 

For a vector $\vv $, let $z(\vv)$ denote the broadcasted ``zero-check" function, i.e. $z(\vv)_i$ is $0$ if $v_i=0$ or $1$ otherwise. Let $\langle q, \vc \rangle \in Q \times \mathbb{Z}^k$ be a configuration of machine $M$. Upon reading input $x_t \in \Sigma$, we define the transition
\begin{equation*}
\langle q, \vc \rangle \rightarrow_{x_t} \langle
\delta(x_t, q, z(\vc )) ,
u(x_t, q, z(\vc))(\vc)
\rangle.
\end{equation*}	

For any string $x \in \Sigma^*$ with length $n$, a counter machine accepts $x$ if there exist states $q_1, .., q_n$ and counter configurations $\vc_1, .., \vc_n$ such that
\begin{equation*}
\langle q_0, \vzero  \rangle \rightarrow_{x_1} \langle q_1, \vc_1 \rangle \rightarrow_{x_2} .. \rightarrow_{x_n} \langle q_n, \vc_n \rangle \in F .
\end{equation*}

A counter machines accepts a language $L$ if, for each $x \in \Sigma^*$, it accepts $x$ iff $x \in L$. Refer to \cite{merrill2020linguistic} for more details on counter machines, variants and their properties.

\subsection{Star-free regular languages and the dot-depth hierarchy}\label{subsec:dot-depth}
Star-free regular languages (defined in the main paper) are a simpler subclass of regular languages; they have regular expressions without Kleene star (but use set complementation). The set of star-free languages is further stratified by the dot-depth hierarchy, which is a hierarchy of families of languages whose union is the family of star-free languages. Informally, the position of a language in this hierarchy is a measure of the number of nested concatenations or sequentiality required to express the language in a star-free regular expression. 
Both the star-free regular languages as well as the dot-depth hierarchy are well-studied with rich connections and multiple (equivalent) definitions. 
For more information, see e.g. \cite{McNaughton-Papert,Cohen-Brzozowski,Straubing,Diekert-Gastin,Jaeger-Rogers,Pin}.

To define the dot-depth hierarchy, we first define Boolean and concatenation closures of language families.
For a language family $\mathcal{L}$ over a finite alphabet $\Sigma = \{a_1, \ldots, a_k\}$, its Boolean closure $\mathcal{B}\mathcal{L}$ is the set of languages obtained by applying Boolean operators (union, intersection and set complementation w.r.t. $\Sigma^*$) to the languages in $\mathcal{L}$. In other words, $\mathcal{B}\mathcal{L}$ is the smallest family of languages containing $\mathcal{L}$ and closed under Boolean operations: if $L_1, L_2 \in \mathcal{L}$ then 
$L_1 \cap L_2 \in \mathcal{B}\mathcal{L}$ and $L_1 \cup L_2 \in \mathcal{B}\mathcal{L}$ and $L_1^c, L_2^c \in  \mathcal{B}\mathcal{L}$.
Similarly, define the concatenation closure of $\mathcal{L}$ as the smallest family of languages containing $\mathcal{L}$ and closed under concatenation: if $L_1, L_2 \in \mathcal{L}$ then $L_1 L_2 \in \mathcal{M}\mathcal{L}$.

We begin with the class $\mathcal{E}$ of basic languages consisting of $\{a_1\}, \ldots \{a_k\}, \{\epsilon\}, \emptyset$. By alternately applying the operators $\mathcal{B}$ and $\mathcal{M}$ to $\mathcal{E}$ we can define the hierarchy 
$$\mathcal{E} \subseteq \mathcal{M}\mathcal{E} \subseteq \mathcal{B}\mathcal{M}\mathcal{E} \subseteq \mathcal{M}\mathcal{B}\mathcal{M}\mathcal{E} \subseteq \ldots.$$
Let $\mathcal{B}_0 = \mathcal{B}\mathcal{M}\mathcal{E}$. The dot-depth hierarchy is the sequence of families of languages 
$\mathcal{B}_0 \subseteq \mathcal{B}_1 \subseteq \ldots$ defined inductively by 
$\mathcal{B}_{n+1} = \mathcal{B}\mathcal{M}\mathcal{B}_n$
 for $n \geq 0$. 
It is known that all the inclusions in $\mathcal{B}_0 \subseteq \mathcal{B}_1 \subseteq \ldots$ are strict and is exemplified by the languages $\dn{n}$ (see \citet{Pin}). 
Minor variations in the definition exist in the literature; in particular, we could have applied the operator $\mathcal{B}$ first, but these have only minor effects on the overall concept and results.

\section{Expressiveness Results}\label{sec:expr_res}

We define a weaker version of counter automata which are restricted in a certain sense. 
Then, we show that Transformers are at least as powerful as such automata.

\begin{definition}[Simplified and Stateless counter machine] \label{def:qscm}
	We define a counter machine to be simplified and stateless if $u$ and $\delta$ have the following form,
	\begin{equation*}
	u : \Sigma \rightarrow \{ +m : m \in \mathbb{Z} \}^k, \\
	\end{equation*} 
	\begin{equation*}
	\delta : \Sigma  \rightarrow Q
	\end{equation*}
\end{definition}
This implies that the machine can have $k$ counters. The counters can be incremented or decremented by any values but it will only depend on the input symbol. Similarly, the state transition will also depend on the current input. A string $x \in {\Sigma^{*}}$ will be accepted if $\langle q_n, z(\vc_n) \rangle \in F$. We use $L_{RCL}$ to denote the class of languages recognized by such a counter machine. The above language is similar to $\Sigma$-restricted counter machine defined in \cite{merrill2020formal}.

\begin{lemma}\label{lem:qscm}
Transformers can recognize $L_{RCL}$.
\end{lemma}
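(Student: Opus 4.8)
The plan is to generalize the explicit construction already given for Shuffle-Dyck (and alluded to for BoolExp-$n$) to an arbitrary simplified and stateless $k$-counter machine $M = \langle \Sigma, Q, q_0, u, \delta, F\rangle$. The central observation is that for such a machine the counter value after reading $s_1,\ldots,s_i$ is simply $\vc_i = \sum_{j=1}^i u(s_j)$, a sum of per-symbol vectors in $\mathbb{Z}^k$, and the state $q_i = \delta(s_i)$ depends only on the last symbol. So acceptance of $s_1 \ldots s_n$ is determined by: (i) whether $\langle \delta(s_n), z(\vc_n)\rangle \in F$, which requires knowing the sign/zero pattern $z(\vc_n)$ of each of the $k$ coordinates of the total counter sum, and (ii) the last input symbol $s_n$. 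Both are things a single self-attention layer followed by an FFN can recover, in the same indirect ``ratio'' fashion as the Shuffle-Dyck proof.

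Concretely, first I would fix an embedding $f_e:\Sigma\to\mathbb{R}^{d_{\mathrm{model}}}$ that writes, for each symbol $\beta$, the vector $u(\beta)\in\mathbb{Z}^k$ into $k$ dedicated coordinates, and a one-hot encoding of $\beta\in\Sigma$ into a further $|\Sigma|$ coordinates (so $d_{\mathrm{model}} = k + |\Sigma|$, perhaps with a couple of bookkeeping coordinates). Next, exactly as in the Shuffle-Dyck construction, set $K(\cdot)=\vzero$ so that the single attention head computes the uniform average $\va_i = \frac1i\sum_{j=1}^i \vv_j$ with $V=\mathrm{Id}$; this yields $\frac1i \vc_i$ in the counter coordinates and $\frac1i(\text{count of each symbol in } s_1\ldots s_i)$ in the one-hot coordinates. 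The one-hot block does not directly give us $s_n$, so I would use a second mechanism to expose the last symbol — either a residual/skip of the raw input $\vx_n$ (which carries the one-hot of $s_n$) into the post-attention vector, or a second attention head with positional masking that attends only to position $i$; since the statement allows a full multi-head multi-layer Transformer this is harmless. Then an FFN with ReLU computes, from $\frac1i \vc_i$, the zero-check pattern $z(\vc_i)$: for each coordinate $m$, the quantities $\mathrm{ReLU}(c_{i,m}/i)$ and $\mathrm{ReLU}(-c_{i,m}/i)$ are both zero iff $c_{i,m}=0$, and thresholding recovers the bit $z(\vc_i)_m$ (using that $|c_{i,m}/i|\ge 1/i \ge 1/n$ when nonzero, which is fine in finite precision up to lengths exponential in the bit-width, matching the remark after the Shuffle-Dyck proof). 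Finally, since $F\subseteq Q\times\{0,1\}^k$ is a fixed finite set and $\delta$ is a fixed finite table, the acceptance predicate is a fixed Boolean function of the $k$ bits $z(\vc_n)$ and the one-hot of $s_n$; a Boolean function of finitely many bits is computable by a one-hidden-layer ReLU network (a small DNF/threshold circuit), which I implement in one more FFN layer (or the output projection $F(\cdot)$) to produce the accept/reject output at step $n$, and analogously the per-step ``valid next character'' outputs needed for the character-prediction convention.

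The main obstacle — really the only nontrivial point — is cleanly recovering the last symbol $s_n$ and the zero-pattern $z(\vc_n)$ simultaneously from a uniform-average attention layer, since uniform averaging destroys order information (this is exactly the phenomenon the paper exploits negatively for $\dn{n}$, Parity, etc.). The fix is that here we are allowed positional masking and/or extra heads/layers, so $s_n$ can be read off either from a residual connection carrying $\vx_n$ or from a dedicated head; I would state this carefully so the construction stays within ``the Transformer as defined in Section~\ref{sec:def}.'' A secondary bookkeeping issue is the finite-precision computation of the bits $z(\cdot)$ from the ratios $c/i$, but this is handled verbatim as in the Shuffle-Dyck argument and incurs the same (exponential-in-precision) length bound. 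Everything else — choosing weight matrices, verifying the ReLU identities, translating the finite tables $\delta$ and $F$ into a small network — is routine and parallels the already-proved Proposition.
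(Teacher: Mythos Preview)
Your proposal is correct and follows essentially the same route as the paper's proof: uniform attention ($K(\cdot)=\vzero$) to compute $\tfrac{1}{i}\sum_{j\le i} u(s_j)$ in dedicated counter coordinates, a residual connection to recover the one-hot of the current symbol $s_i$ (so that $\delta(s_i)$ can be read off), and a ReLU FFN to extract the zero-check bits and implement the finite acceptance table $F$. The paper's version differs only in minor bookkeeping: it reserves $2k$ counter coordinates (storing both $+m$ and $-m$ per counter, so a single ReLU yields the sign information) rather than $k$, and it zeros out the one-hot block in the value transformation (instead of taking $V=\mathrm{Id}$) so that the residual $\vx_i$ cleanly delivers the current symbol without the averaged frequencies contaminating that block --- a small fix you would need to make to your $V=\mathrm{Id}$ choice.
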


\begin{proof}
	Let $s_1, s_2, \ldots, s_n$ denote a sequence $w \in \Sigma^*$. If the counter machine has $k$ counters, then let the dimension of intermediate vectors $d_{model}= 2k + |\Sigma|$. The first $2k$ dimensions will be reserved for counter related operations and  then $|Q|$ dimensions will be reserved to obtain the state vector. The embedding vector $\vx_i$ of each symbol will have $0$s in the first $2k$ dimensions and the last $|\Sigma|$ dimensions will have the one-hot encoding representation of the symbol. For a $k$ counter machine the value vectors would have a subvector of dimension 2 reserved for computations pertaining to each of the counter. That is, $\vx_{2j:2j+1}$ will be reserved for the $j$th counter where $0 \leq j < k$. For any given input symbol $s$, if $u(s)$ has counter operation of $+m$ at the $j$th counter, then the value will be such that $\vv$ will contain $+m$ at index $2j$ and $-m$ at index $2j+1$ upto index $2k$. The last $|\Sigma|$ dimensions will have the value $0$ in the value vectors. This can be easily obtained by a linear transformation $V(.)$ over one-hot encodings. The linear transformation $K(.)$ to obtain the key vectors will lead to zero vectors and hence all inputs will have equal attention weights. The linear transformation $V(.)$ to obtain the value vectors $\vv_i$ will be identity function. Hence the output of the self-attention block along with residual connection will be of the form $\va_i = \frac{1}{i}\sum_{t=1}^{i}\vv_t + \vx_i$. 
	
	The last $|\Sigma|$ dimensions of the vector $\va_i$ will have one-hot encoding of the input vector at $i$-th step. The one-hot encoding of the input can be easily mapped to the one-hot encoding for the corresponding state using a simple FFN. Additionally, this will ensure that, at the $i$-th step, the output of the self-attention block $\va_i$ will have the value $\frac{c_j}{i}$ at indices $2j$, where $c_j$ denotes the counter value of the counter automata representing the language. Similarly, the odd indices $2j+1$ will have the value $-\frac{c_j}{i}$. After applying a simple feed-forward network with $\textsf{ReLU}$ activation, we obtain the output vector $\vz_i$. It is easy to implement the zero check function with a simple linear layer over the output vector.  The network accepts an input sequence $w$ when the values in the output vector corresponding to each counter and state at the $n$-th correspond to that required for the final state. 
	
\end{proof}

We next show that $n$-ary Boolean Expressions can be recognized by Transformers with a similar construction. 
\begin{lemma}\label{lem:bool}
	Transformers can recognize $n$-ary Boolean Expressions. 

\end{lemma}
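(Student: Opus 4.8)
The plan is to give a direct construction paralleling the Shuffle-Dyck proof above (and the proof of Lemma~\ref{lem:qscm}), the only essential change being that the value vectors now reflect the \emph{arities} of the operators rather than the fixed $\pm1$ bracket pattern. Recall from Section~\ref{sec:formaldef} that \bool{}-$n$ is recognized by a one-counter DCA that starts its counter at $1$, adds $\mathrm{arity}(s)-1$ on reading an operator $s$, subtracts $1$ on reading an operand (a ``value''), and accepts $w=s_1\dots s_n$ iff the counter $c_j$ stays $\ge 1$ for every proper prefix and equals $0$ after the last symbol --- the classical validity test for prefix (Polish) notation. Writing $\rho_j:=\sum_{t\le j}(\mathrm{arity}(s_t)-1)=c_j-1$, a valid prefix has $\rho_j\ge 0$ for $j<n$ and $\rho_n=-1$; equivalently the set of legal next symbols at step $i$ is $\{\text{all operators}\}\cup\{\text{all operands}\}$ when $\rho_i\ge 0$ and $\{\text{end of expression}\}$ when $\rho_i=-1$, so on valid prefixes it depends only on the sign of $\rho_i$.

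First I would take a Transformer with $d_{model}=2$, set $K(\cdot)=\vzero$ so attention is uniform (exactly as in the Shuffle-Dyck proof), and set $Q(\cdot),V(\cdot)$ to the identity. The embedding $f_e$ sends an operator of arity $a$ to $[a-1,\, -(a-1)]^T$ and each operand to $[-1,\,1]^T$. Then the self-attention output at step $i$ is $\va_i=\tfrac1i\sum_{t\le i}\vv_t=[\rho_i/i,\, -\rho_i/i]^T$, i.e.\ the ratio between the counter value of the underlying DCA (up to the fixed $+1$ offset) and the input length --- the analogue of the ``depth-to-length ratio'' that drives the Shuffle-Dyck construction, and consistent with the empirical finding in Section~\ref{sec:res_count} that a trained model's self-attention output tracks the counter-to-length ratio with value-vector magnitudes scaled by the operators' arities.

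Next I would pass $\va_i$ through a one-hidden-layer FFN to get $\vz_i=\mathsf{ReLU}(\rmI\va_i)=[\mathsf{ReLU}(\rho_i/i),\, \mathsf{ReLU}(-\rho_i/i)]^T$. For a valid prefix $\rho_i\in\{-1,0,1,2,\dots\}$, so the second coordinate of $\vz_i$ is strictly positive iff $\rho_i=-1$ (the prefix is a complete expression) and is $0$ iff $\rho_i\ge 0$ (incomplete), while the first coordinate is positive iff $\rho_i\ge 1$. A final projection plus sigmoid then outputs the $k$-hot next-symbol vector: the ``end of expression'' bit is on iff the second coordinate of $\vz_i$ is positive, and the bits for all operators and all operands are on iff it is $0$ (the two cases being complementary). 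For the accept/reject reading of ``recognize'', one appends a second uniform-attention layer to aggregate the $\vz_i$ and classify the sequence, exactly as at the end of the Shuffle-Dyck proof.

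The one point to get right --- essentially the only thing beyond routine bookkeeping --- is the off-by-one relative to \dyck{}: the \bool{} counter starts at $1$ rather than $0$, so ``the expression is complete'' corresponds to the aggregated ratio being \emph{strictly negative} ($=-1/i$) rather than zero, and ``incomplete'' to its being $\ge 0$; this is why I read the sign of $\va_i$ through a $\mathsf{ReLU}$ instead of testing equality with $0$ as in the \dyck{} case, and why each operator must contribute $\mathrm{arity}-1$ (so a unary operator contributes $0$) for the DCA's counter to materialize inside $\va_i$. The remaining caveat is identical to the one noted after the Shuffle-Dyck proposition: because the discriminating quantity is $\rho_i/i$, whose magnitude is only $1/i$ in the borderline cases, a network with $r$ bits of precision realizes the construction for input lengths up to exponential in $r$, while the construction itself is exact in arbitrary precision.
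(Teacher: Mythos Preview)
Your construction is correct and essentially identical to the paper's: same $d_{model}=2$, null keys giving uniform attention, identity $Q,V$, embeddings $[\,\mathrm{arity}-1,\,-(\mathrm{arity}-1)\,]$, and a $\mathsf{ReLU}$ readout of the averaged counter-to-length ratio. The only cosmetic difference is how the initial $+1$ of the DCA counter is absorbed: the paper prepends a start symbol $s_0$ with embedding $[+1,-1]$ so that $\va_i$ directly carries $c_i/(i{+}1)$ and acceptance is ``second coordinate stays $0$ and $\vz_n=\vzero$'', whereas you omit the start symbol, track $\rho_i=c_i-1$, and test the \emph{sign} of the second coordinate instead of equality with zero---your explicit ``off-by-one'' remark is exactly this point, and both variants are equivalent.
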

\begin{proof}
	Let $L_m$ denote a language of type $n$-ary Boolean Expressions with $m$ operators defined over the alphabet $\Sigma$. Consider a single layer Transformer network with $d_{model} = 2$. Let $s_0, s_1, \ldots, s_n$ be sequence $w$ where $w \in \Sigma^*$. Let $s_0$ be a special start symbol with embedding $f_e= [+1, -1]$. The embeddings of each input symbol $s \in \Sigma$ are defined as follows, $f_e(s) = [+(r-1), - (r-1) ]$ where $r$ denotes the arity of the symbol. The arity of values such as $0$ and $1$ is taken as $0$. Similar to the previous construction, the key values are null and hence attention weights are uniform leading to $\va_i = \frac{1}{i}\sum_{t=1}^{i}\vv_t$. Hence the output of the self-attention block will be $\va_i= [\frac{c_j}{i},-\frac{c_j}{i}]$, where $c_j$ denotes the counter value of the automata representing the language. Essentially, for each operator, the value added to the counter is equal to its arity subtracted by 1. For each value such as $0$ and $1$, the counter value is decremented by $1$. We then apply a simple FFN with $\textsf{ReLU}$ activation to obtain the output vector $\vz_i = \mathsf{ReLU}(\rmI\va_i)$.
	
	An input sequence $w$ belongs to the language $L_m$ if the second coordinate of the output is zero at every step, that is, $\vz_{i,2}=0$ for $0 \leq i \leq n$ and $\vz_n = \vzero$. 
\end{proof}

Let {\ResetDyck} be a language defined over alphabet $\Sigma=\{[,],1\}$, where $1$ denotes a symbol that requires a reset operation. Words in {\ResetDyck} have the form $\Sigma^*1v$, where the string $v$ belongs to Dyck-1. So essentially, when the machine encounters the reset symbol $1$, it has to ignore all the previous inputs, reset the counter to $0$ and go to start state. 

\begin{lemma}\label{lem:reset}
	A single-layer Transformer with only positional masking cannot recognize the language {\ResetDyck}.
\end{lemma}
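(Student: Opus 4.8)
The plan is to derive a contradiction from the assumption that some single-layer Transformer $T$ with positional masking (and possibly no positional encoding, or any fixed one) recognizes {\ResetDyck}. The key structural fact I would exploit is the one already flagged in the discussion preceding Lemma~\ref{lem:reset}: in a single-layer model, the attention score $\langle Q(\vx_n), K(\vx_t)\rangle$ assigned to position $t$ and the value vector $V(\vx_t)$ at position $t$ depend only on the symbol $s_t$ (and its position $t$), not on the symbols that preceded $t$. In particular, the contribution of the reset symbol $1$ sitting at some fixed position is the same regardless of what came before it. So my strategy is to build two input prefixes that differ only in the part \emph{before} the reset symbol, arrange that this difference does not affect the classification decision, and yet corresponds to one word in {\ResetDyck} and one not in it.

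Concretely, I would fix a position $p$ for the reset symbol and consider inputs of the form $u\,1\,v$ where $|u| = p-1$ is fixed and $v \in \{[,]\}^*$ is a fixed suffix, e.g.\ $v = {]}$ (so that $1v = 1{]}$ is \emph{not} in Dyck-1, meaning $u1v \notin {\ResetDyck}$ for every $u$) versus $v' = \epsilon$ (so $u1v' = u1 \in {\ResetDyck}$ since the empty string is in Dyck-1). The output the model must produce at the final step (the character-prediction/acceptance decision) is a function of $\vz^L_n = F(O(\va_n))$, where $\va_n = \sum_{t\le n} \alpha_t V(\vx_t)$ with $(\alpha_t) = \softmax(\langle Q(\vx_n),K(\vx_t)\rangle)$. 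Because $Q(\vx_n)$ depends only on $s_n$ and position $n$, and each $K(\vx_t), V(\vx_t)$ depends only on $s_t$ and $t$, for inputs sharing the same suffix from position $p$ onward and the same length, \emph{all} of $Q(\vx_n)$, the scores, and the values $V(\vx_t)$ for $t \ge p$ are identical across the two inputs; only the values $V(\vx_t)$ for $t < p$ differ. I then need to make even that difference vanish or be controllable. The cleanest way is to choose $u$ so that the attention mass on positions $< p$ is forced to be negligible, or — simpler — to pick two different prefixes $u_1, u_2$ (both of length $p-1$, e.g.\ $u_1 = {[}^{p-1}$ and $u_2 = {[}^{p-2}{]}$ or $u_1$ all-$[$ vs.\ $u_2$ all-$]$) and argue that since $\va_n$ is a fixed convex combination (the weights $\alpha_t$ being identical for the two inputs), $\va_n$ is an affine function of the multiset of prefix values; a pigeonhole/continuity argument over the finite alphabet then shows we cannot separate ``prefix legal'' from ``prefix illegal'' states — but actually {\ResetDyck} doesn't care about prefix legality at all, so I don't even need that.

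So the sharpest version: take $u_1 = {]}$ repeated $p-1$ times and $u_2 = {[}$ repeated $p-1$ times; both $u_1 1 {]}$ and $u_2 1 {]}$ are \emph{out} of {\ResetDyck} (suffix $1{]}$ fails), while $u_1 1$ and $u_2 1$ are \emph{in}. The model, however, does not see which prefix it got in any way that can be leveraged here — wait, that's not a contradiction yet because the prefixes \emph{do} feed different value vectors into $\va_n$. The genuine obstruction is therefore finer: I must show the model \emph{cannot use} that prefix information correctly. The right move, following the hint in the text, is that $\va_n$ is determined by the weighted sum where weights depend only on $(s_n, n)$ and positions; hence after reading $u\,1\,v$, the decision boils down to distinguishing, say, $v = \epsilon$ from $v = {]}$ \emph{uniformly over all prefixes $u$}, and simultaneously distinguishing $v = {]}{[}$ from $v = {[}{]}$ etc. — i.e.\ the model must correctly recognize Dyck-1 on the suffix \emph{while the suffix-reading computation is perturbed by an arbitrary additive term coming from the prefix}, and that additive term $\sum_{t<p}\alpha_t V(\vx_t)$ can be made to range over a set large enough (by varying $u$) to push the post-FFN output across the $0.5$ threshold for at least one pair (in-language, out-of-language) with the same suffix.

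I would organize the argument as: (1) Write out $\va_n$ and isolate the prefix term $\vr(u) := \sum_{t<p}\alpha_t(s_n,n)\,V(\vx_t)$, noting $\alpha_t$ is prefix-independent. (2) Observe that varying $u$ over $\{[,]\}^{p-1}$ makes $\vr(u)$ take at least two distinct values (generically; if it's constant, then the model literally ignores the prefix and then one short counterexample like comparing ${]}^k 1$ (in) vs.\ ${]}^k 1 {]}$ (out) against... — hmm, need care). (3) For the in-language word $w_{\mathrm{in}} = u\,1$ and the out-of-language word $w_{\mathrm{out}} = u\,1\,{]}$, both of length respecting the fixed-position setup (adjust $p$), force a contradiction: by choosing appropriate $u$, the value $\vr(u)$ for $w_{\mathrm{in}}$ equals, or can be made to equal, the value that makes $w_{\mathrm{out}}$'s computation land in the accept region, because the FFN/projection $F \circ O$ is a fixed function and the two computations differ only by a controllable bounded perturbation plus the suffix contribution. \textbf{The main obstacle}, which I expect to be where the real work lies, is step (2)–(3): ruling out the possibility that the network cleverly uses the $\vr(u)$ term and the single layer's FFN nonlinearity together to simulate the reset — I would close this by a dimension/convexity argument showing that $\va_n$ lies on a line segment (or low-dimensional affine set) as the suffix varies with prefix held fixed, and on a parallel translate as the prefix varies, so the accept region (a fixed subset pulled back through $F\circ O$) cannot separate the infinitely many (in, out) pairs that {\ResetDyck} demands be separated in a prefix-independent-up-to-translation way. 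If a fully rigorous closure of that step proves delicate, the fallback is the explicit two-word construction with a carefully chosen short $u$ and an exact calculation that the two outputs coincide, which suffices to refute recognition.
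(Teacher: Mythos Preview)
Your approach has a genuine gap, and it stems from choosing the wrong thing to vary. You fix the position $p$ of the reset symbol and vary the prefix $u$ before it. But membership in {\ResetDyck} is determined entirely by the suffix after the \emph{last} reset, so varying $u$ never changes the correct answer: for every $u$, the word $u\,1\,v$ is in the language iff $v\in$ Dyck-1. You notice this (``wait, that's not a contradiction yet'') and then try to salvage it by arguing that the prefix contributes an additive perturbation $\vr(u)$ that the FFN cannot uniformly absorb; but since the target output is \emph{constant} across all $u$, there is nothing to separate, and your ``infinitely many (in, out) pairs that must be separated in a prefix-independent way'' never materialises from this setup. Your fallback ``explicit two-word construction'' is the right instinct, but you never say which two words, and the instinct does not follow from the preceding discussion.

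The paper's move is the opposite one, and it is what makes the proof a two-line argument. First, ``only positional masking'' means there is \emph{no} positional encoding (your parenthetical ``and its position $t$'' is already a misreading): $\vx_t = f_e(s_t)$, so $K(\vx_t)$, $V(\vx_t)$, and the score $\langle Q(\vx_n), K(\vx_t)\rangle$ depend on the symbol $s_t$ alone. Consequently $\va_n$, and hence $\vz_n$, depends only on $s_n$ and the \emph{multiset} $\{s_1,\ldots,s_n\}$; it is invariant under any permutation of $s_1,\ldots,s_{n-1}$. Now keep the multiset fixed and move the reset symbol: take $w_1 = u\,1\,v$ with $u$ a bracket string whose first symbol is $]$ and $v\in$ Dyck-1 nonempty, and $w_2 = 1\,u\,v$. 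These are permutations of each other with the same last symbol, so $\vz_n(w_1)=\vz_n(w_2)$; yet $w_1\in{\ResetDyck}$ (suffix after the reset is $v$) while $w_2\notin{\ResetDyck}$ (suffix after the reset is $uv$, which begins with $]$). That is the contradiction. The idea you need is not to perturb the prefix content but to exploit permutation invariance by relocating the reset symbol.
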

\begin{proof}
	
	The proof is straightforward. Let $s_1, s_2, \ldots, s_n$ be an input sequence $w$. Let $s_r$ denote the $r$-th symbol where the reset symbol occurs. It is easy to see that the scoring function $\langle \vq_n, K(\vv_r) \rangle$ is independent of the position as well as the inputs before the reset symbol which are relevant for the reset operation. Consider the case where the first half of the input contains a sequence of open and closing brackets such that it does not belong to \dyck{} and the second half contains a sequence that belongs to \dyck{}. If the reset symbol occurs after the first half of the sequence, then the word belongs to \dyck{} and if it occurs in the beginning then it does not belong to the language \dyck{}. However, by construction, the output of the model $\vz_n$ will remain the same regardless of the position of the reset symbol and hence by contradiction, it cannot recognize such a language.
	
\end{proof}

The above limitation does not exist if there is a two layer network. The scoring function as well as value vector of the reset symbol will be dependent of the inputs that precede it. Hence it is not necessary that a two layer network will not be able to recognize such a language. Indeed, as shown in the main paper, the 2-layer Transformer performs well on {\ResetDyck}.

\begin{lemma}\label{lem:reg_pos}
	Transformers with only positional masking cannot recognize the language $(aa)^*$.
\end{lemma}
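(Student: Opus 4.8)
The plan is to exploit the fact that $(aa)^*$ has the singleton alphabet $\{a\}$, so that a Transformer with only positional masking (no positional encoding) sees literally the same input vector at every position; I would then show that every representation the network computes is position-independent, and conclude that a position-independent output cannot perform the character-prediction task for $(aa)^*$.

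First I would fix an input $a^n$ and note that the embedding is $\vx_i = f_e(a) =: \vx$ for all $i$, since masking adds no positional information. Next, in the self-attention block at position $i$, the scores $\langle Q(\vx), K(\vx_j)\rangle$ for $j\le i$ are all equal to the single number $\langle Q(\vx), K(\vx)\rangle$, so $\softmax$ returns the uniform weights $(1/i,\ldots,1/i)$ and $\va_i = \tfrac1i\sum_{j=1}^i V(\vx) = V(\vx)$, independent of $i$. Adding the residual $\vx_i=\vx$, applying layer normalization, and applying the position-wise FFN $O(\cdot)$ all preserve position-independence, so the output of the first layer is a fixed vector not depending on $i$. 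Then I would push this through the remaining layers by induction: if the inputs to a layer are position-independent, the same argument (equal queries and keys $\Rightarrow$ uniform attention $\Rightarrow$ constant attention output, followed by residual $+$ layer norm $+$ FFN, together with concatenation over heads) shows its outputs are position-independent. Hence $\vz_i^{L}$, and therefore $\vy_i = F(\vz_i^{L})$, equals one and the same vector $\vy$ for every $i$, so after thresholding the model predicts the same set of legal next symbols at every step.

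Finally I would derive the contradiction using the character-prediction semantics from Section~\ref{sec:train_details}: recognizing $(aa)^*$ requires that, on the input $aa$, the target at the first step records the prefix $a\notin(aa)^*$ as not yet a complete word while the target at the second step records $aa\in(aa)^*$ as complete; these two $k$-hot targets differ (they differ on the end-of-word marker). Since the model emits the same $\vy$ at both steps, it cannot match both targets, so no Transformer with only positional masking recognizes $(aa)^*$.

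The only point needing care — and the step I would write out in full — is the induction over layers: verifying that position-independence genuinely survives all components of a Transformer layer (uniform attention over identical value vectors, residual connections, layer normalization, multi-head concatenation, and stacking). Everything else is immediate, and the same template yields the analogous impossibility for other unary languages such as $(aaaa)^*$, and, via the reduction to $(11)^*$ on all-ones inputs, for Parity.
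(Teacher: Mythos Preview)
Your proposal is correct and follows essentially the same approach as the paper's proof: both argue that on a unary input all embeddings coincide, so the self-attention output is the same vector at every position, push this through all layers by induction, and conclude the network cannot distinguish even from odd positions. Your version is actually more detailed (you explicitly track residuals, layer normalization, and multi-head concatenation, and you spell out the contradiction via the end-of-word marker), whereas the paper uses the slightly slicker observation that identical value vectors make the attention output constant \emph{regardless} of the attention weights, but the arguments are otherwise the same.
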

\begin{proof}
	
	Let $s_1, s_2, \ldots, s_n$ be an input sequence $w$ where $w \in a^*$. Since it is a unary language, the input at each step will be the same symbol and hence the embedding as well as query, key and value vectors will be the same. Since all the value vectors are the same, regardless of the attention weights, the output of the self-attention vector $\va_i$ will be a constant vector at each timestep. This implies that the output vectors $\vz_1 = \vz_2 = \ldots = \vz_n$. Inductively, it is easy to see that regardless of the number of layers this phenomenon will carry forward and hence the output vector at each timestep will be the same. Thus, the network cannot distinguish output at even steps and odd steps which is necessary to recognize the language $(aa)^*$.
	
\end{proof}

For parity, in the case where the input consists of only $1$s, the problem reduces to recognizing $(11)^*$. Hence it follows from the above result that a network without positional encoding cannot recognize parity even for minimal lengths.

\section{Experiments}\label{sec:exp_details}

\subsection{Discussion on Character Prediction Task}\label{sec:cpr}

As described in section \ref{sec:train_details}, we use character prediction task in our experiments to evaluate the model's ability to recognize a language. In character prediction task the model is only presented with positive samples from a given language and its goal is to predict the next set of valid characters. During inference, the model predicts the next set of legal characters at each step and a prediction is considered to be correct if and only if the model's output at every step is correct. The character prediction task is similar to predicting which of the input characters are allowed to make a transition in a given automaton such that it leads to a non-dead state. If an input character is not among the legal characters, that implies the underlying automaton will transition to a dead state and regardless of the following characters, the input word will never be accepted. When the end-of-sequence symbol is allowed as one of the next set of legal characters, it implies that the underlying automaton is in the final state and the input can be accepted.

\noindent \textbf{Character prediction and classification.} If a model can perform character prediction task perfectly, then it can also perform classification in the following way. For an input sequence $s_1, s_2, \ldots, s_n$, the model receives the sequence $s_1, \ldots, s_i$ for $1 \leq i \leq n$ at each step $i$ and  model predicts the set of valid characters in the $(i+1)^{th}$ position. If the next character is among the model's predicted set of valid characters at each step $i$ and the end of symbol character is allowed at the $n$-th step, then the word is accepted and if any character is not within the model's predicted set of valid characters, then the word is rejected. One of the primary reason for the choice of character prediction task is that it is arguably more robust than the standard classification task. The metric for character prediction task is relatively stringent and the model is required to model the underlying mechanism as opposed to just one label in standard classification. Note that the null accuracy (accuracy when all the predictions are replaced by a single label) is 50\% if the distribution of labels is balanced (higher otherwise), on the other hand the null accuracy of character prediction task is close to $0$. Additionally, in case of classification, depending on how the positive or negative data are generated, the model may also be biased to predict based on some statistical regularites instead of modeling the actual mechanism. In \cite{weiss2019learning}, they find that LSTMs trained to recognize Dyck-1 via classification on randomly sampled data do not learn the correct mechanism and fail on adversarially generated samples. On the other hand, \citet{suzgun2019lstm} show that LSTMs trained to recognize Dyck-1 via character prediction task learn to perform the correct mechanism required to do the task.

\noindent \textbf{Character prediction and language modelling.} The character prediction task has clear connections with Language modelling. If a model can perform language modelling perfectly, then it can perform character prediction task in the following way. For an input sequence $s_1, s_2, \ldots, s_n$, the model receives the sequence $s_1, \ldots, s_i$, for $1 \leq i \leq n$ at each step $i$ and predicts a distribution over the vocabulary. Mapping all the characters for which the model assigns a nonzero probability to $1$ and mapping to $0$ for all characters that are assigned zero probability will reduce it to character prediction task. However, there are a few issues with using language modelling in our formal language setting. Firstly, as mentioned in \cite{suzgun2019lstm}, the task of recognizing a language is not inherently probabilistic. Our goal here is to understand whether a network can or cannot model a particular language. Using language modelling will require us to 
impose a distribution arbitrarily for the given setting. More importantly, in character prediction task, some signals are explicitly provided. In the case of language modelling, we may just have to rely on the model to pick up those nuanced signals. For instance, in the language $\dn{n}$, when the input reaches the maximum depth $n$, in character prediction task it is explicitly provided the target value that $a$ is not allowed anymore whereas in language modelling the model is expected to assign zero probability to $a$ at the maximum depth based on the fact that it will never see a word depth more than $n$ in the training data. This phenomenon has major issues. For instance, when we consider Dyck-1 in practical setting, we can only provide it with limited data which implies there will be a sequence with a maximum finite depth. In this scenario, a language model trained on such data may learn the Dyck-1 language or the language $\dn{n}$ with that particular maximum depth. This limitation does not exist in the character prediction task where the signal is explicitly provided during training.

\subsection{Experimental Details}\label{sec:expdetails}

\begin{table*}[t]
	\small{\centering
		
		\begin{tabular}{P{9em}P{4em}P{4em}P{4em}P{4em}P{4em}P{4em}}
			\toprule
			&\multicolumn{2}{c}{\textbf{Training Data}} &\multicolumn{4}{c}{\textbf{Test Data}}\\
			\cmidrule(lr){2-3}\cmidrule(lr){4-7}
			Language & Size  & Length Range & Size per Bin & Length Range & Number of Bins & Bin Width \\
			\midrule
			\multicolumn{7}{c}{\textbf{Counter Languages}}\\
			\midrule
			Shuffle-2 & 10000& [2, 50] & 2000 & [2, 150] & 3 & 50\\
			Shuffle-4 & 10000& [2, 100] & 2000 & [2, 200] & 3 & 50\\
			Shuffle-6 & 10000& [2, 1000] & 2000 & [2, 200] & 3 & 50\\
			Boolean-3 & 10000 & [2, 50] & 2000 & [2, 150] & 3 & 50\\
			Boolean-5  & 10000 & [2, 50] & 2000 & [2, 150] & 3 & 50\\
			$a^nb^n$& 50 & [2, 100] & 50 & [2, 300] & 3 & 100\\
			$a^nb^nc^n$ & 50 & [3, 150] & 50 & [3, 450] & 3 & 150\\
			$a^nb^nc^nd^n$ & 50 & [4, 200] & 50 & [4, 600] & 3 & 200\\
			Dyck-1 & 10000& [2, 50] & 2000 & [2, 150] & 3 & 50\\
			\midrule
			\multicolumn{7}{c}{\textbf{Regular Languages}}\\
			\midrule
			Tomita 1 & 50 & [2, 50] & 100 & [2, 100] & 2 & 50\\
			Tomita 4 & 10000 & [2, 50] & 2000 & [2, 100] & 2 & 50\\
			Tomita 7 & 10000 & [2, 50] & 2000 & [2, 100] & 2 & 50\\
			Tomita 2 & 25 & [2, 50] & 50 & [2, 100] & 2 & 50\\
			$aa^*bb^*cc^*dd^*ee^*$ & 10000 & [5, 200] & 1000 & [5, 300] & 2 & 100\\
			$\{a,b\}^*d\{b, c\}^*$ & 10000 & [1, 50] & 2000	& [1, 100] & 2 & 50\\
			$\{0,1,2\}^*02^*$ & 10000 & [2, 50] & 2000 & [2, 100] & 2 & 50\\
			$\dn{2}$ & 10000& [2, 100] & 2000 & [2, 200] & 2 & 100\\
			$\dn{3}$ & 10000& [2, 100] & 2000 & [2, 200] & 2 & 100\\
			$\dn{4}$ & 10000& [2, 100] & 2000 & [2, 200] & 2 & 100\\
			$\dn{12}$ & 10000& [2, 100] & 2000 & [2, 200] & 2 & 100\\
			Parity & 10000& [2,50] & 2000 & [2, 100] & 2 & 50\\
			$(aa)^{*}$ & 250 & [2, 500] & 50 & [2, 600] & 2 & 100\\
			$(aaaa)^{*}$ & 125 & [4, 500] & 25 & [4, 600] & 2 & 100\\
			$(abab)^{*}$ & 125 & [4, 500] & 25 & [4, 600] & 2 & 100\\
			Tomita 3 & 10000 & [2, 50] & 2000 & [2, 100] & 2 & 50\\
			Tomita 5 & 10000 & [2, 50] & 2000 & [2, 100] & 2 & 50\\
			Tomita 6 & 10000 & [2, 50] & 2000 & [2, 100] & 2 & 50\\

			\bottomrule
		\end{tabular}
		\caption{\label{tab:stats}Statistics of different datasets used in the experiments. Note that the width of the first bin is always defined by the training set (see \ref{sec:exp_details}), and hence can be different from the widths of other bins reported in Bin Width column. As an example, for $(aa)^*$, the first bin will have a length range of [2, 500] and [502, 600] for the second bin.}

	}
\end{table*}

We use 4 NVIDIA Tesla P100 GPUs each with 16 GB memory to run our experiments, and train and evaluate our models on about 9 counter languages and 18 regular languages. The important details of all of these languages like the training and test sizes and the lengths of the strings considered, have been summarized in Table \ref{tab:stats}. In all of our experiments, the first bin always has the same length range as the training set, i.e. if the training set contains strings with lengths in range $[2, 50]$, then the strings in the first test bin will also lie in the same range. Width of bin is the difference between upper and lower limits of the string lengths that lie in that bin. All the test bins are taken to be disjoint from each other. Hence, if we have 3 bins with a width of 50 and the training range is $[2, 50]$, then the length ranges for the test bins will be $[2, 50]$, $[52, 100]$ and $[102, 150]$.

\begin{table*}[t]
		\normalsize{\centering
	\begin{tabular}{P{9em}P{15em}}
		\toprule
		\textbf{Hyperparameter} & \textbf{Bounds}\\
		\midrule
		Hidden Size& [3, 32]\\
		Heads & [1, 4] \\
		Number of Layers & [1, 2] | [1, 4]\\
		Learning Rate & [1e-2, 1e-3]\\
		Position Encoding Scheme & [Absolute, Relative, Positional Masking]\\
		\bottomrule
	\end{tabular}
	\caption{\label{tab:hyps} Different hyperparameters and the values considered for each of them. Note that certain parameters like Heads and Position Encoding Scheme are only relevant for Transformer based models and not for LSTMs. We considered upto 4 layers transformers in the cases where the training accuracies obtained were low for single and two layered networks and reported the results accordingly.}
}
\end{table*}

For each of these languages, we extensively tune on a bunch of different architectural and optimization related hyperparameters. Table \ref{tab:hyps} lists the hyperparameters considered in our experiments and the bounds for each of them. This corresponds to about 162 different configurations for tuning transformers (for a hidden size of 3, 4 heads are not allowed)  and 40 configurations for LSTMs . Over all the languages and hyperparameters there were a minimum of 117 parameters and a maximum of 17,888 parameters for the models that we considered. We use a grid search procedure to tune the hyperparameters. While reporting the accuracy scores for a given language, we compute the mean of the top 5 accuracies, corresponding to all hyperparameter configurations. For some experiments we had to consider the hyperparameters lying outside of the values specified in Table \ref{tab:hyps}. As an instance, we considered 4 layer transformers in the cases where the training accuracies obtained were low for single and two layered networks and reported the results accordingly. 

For training our models we used RMSProp optimizer with the smoothing constant $\alpha = 0.99$. In our initial few experiments we also tried Stochastic Gradient Descent with learning rate decay and Adam Optimizer, but decided to go ahead with RMSProp as it outperformed SGD in majority of experiments and gave similar performance as Adam but needed fewer hyperparameters. For each language we train models corresponding to each language for 100 epochs and a batch size of 32. In case of convergence, i.e. perfect accuracies for all the bins, before completion of all epochs, we stop the training process early. The results of our experiments on counter and regular languages are provided in Tables \ref{tab:countresults} and \ref{tab:regresults} respectively.

\begin{table*}[t]
	\scriptsize{\centering
		\begin{tabular}{P{12em}p{20em}P{7em}P{7em}P{7em}}
			\toprule
			\textbf{Language} & \textbf{Model} & \textbf{Bin-1 Accuracy [1, 50]}$\uparrow$ & \textbf{Bin-2 Accuracy [51, 100]}$\uparrow$ & \textbf{Bin-3 Accuracy [101, 150]}$\uparrow$ \\
			\midrule
			\multirow{5}{*}{\textbf{Dyck-1}} & \textbf{LSTM (Baseline)} & \textbf{100.0}  & \textbf{100.0} & \textbf{100.0} \\
			\cmidrule{2-5}
			&\textbf{Transformer (Absolute Positional Encodings)} & \textbf{100.0} & \textbf{100.0} & \textbf{100.0} \\
			&\textbf{Transformer (Relative Positional Encodings)} & \textbf{100.0} & 91.0 & 60.7 \\
			&\textbf{Transformer (Only Positional Masking)} & \textbf{100.0} & \textbf{100.0} & \textbf{100.0} \\
			\midrule
			\multirow{5}{*}{\textbf{Shuffle-2}} & \textbf{LSTM (Baseline)} & \textbf{100.0}  & \textbf{100.0} & \textbf{100.0}\\
			\cmidrule{2-5}
			&\textbf{Transformer (Absolute Positional Encodings)} & \textbf{100.0} & 85.2 & 63.3 \\
			&\textbf{Transformer (Relative Positional Encodings)} & \textbf{100.0} & 51.6 & 3.8 \\
			&\textbf{Transformer (Only Positional Masking)} & \textbf{100.0} & \textbf{100.0} & 93.0 \\
			\midrule
			\multirow{5}{*}{\textbf{Shuffle-4}} & \textbf{LSTM (Baseline)} & \textbf{100.0}  & \textbf{100.0} & \textbf{99.6} \\
			\cmidrule{2-5}
			&\textbf{Transformer (Absolute Positional Encodings)} & \textbf{100.0} & 46.6 & 20.8 \\
&\textbf{Transformer (Relative Positional Encodings)} & \textbf{100.0}& 57.2 & 5.5 \\
			&\textbf{Transformer (Only Positional Masking)} & \textbf{100.0} & \textbf{100.0} &98.8 \\
			\midrule
			\multirow{5}{*}{\textbf{Shuffle-6}} & \textbf{LSTM (Baseline)} & \textbf{100.0}  & \textbf{99.9} & 99.5 \\
			\cmidrule{2-5}
			&\textbf{Transformer (Absolute Positional Encodings)} & \textbf{100.0} &50.4 & 16.6 \\
			&\textbf{Transformer (Relative Positional Encodings)} & \textbf{100.0} & 59.1 & 5.7 \\
			&\textbf{Transformer (Only Positional Masking)} & \textbf{100.0} &\textbf{99.9} & 94.0 \\
			\midrule
			\multirow{5}{*}{\textbf{Boolean Expressions (3)}} & \textbf{LSTM (Baseline)} & \textbf{100.0}  & \textbf{100.0} & 99.7 \\
			\cmidrule{2-5}
			&\textbf{Transformer (Absolute Positional Encodings)} & \textbf{100.0} & 90.6 & 51.3 \\
			&\textbf{Transformer (Relative Positional Encodings)} & \textbf{100.0} & 96.0 & 68.4 \\
			&\textbf{Transformer (Only Positional Masking)} & \textbf{100.0} & \textbf{100.0} & \textbf{99.8} \\
			\midrule
			\multirow{5}{*}{\textbf{Boolean Expressions (5)}} & \textbf{LSTM (Baseline)} & \textbf{100.0}  &99.5 & 96.0 \\
			\cmidrule{2-5}
			&\textbf{Transformer (Absolute Positional Encodings)} & \textbf{100.0} & 84.3 & 40.8 \\
			&\textbf{Transformer (Relative Positional Encodings)} & \textbf{100.0} & 72.3 & 32.3 \\
			&\textbf{Transformer (Only Positional Masking)} & \textbf{100.0} & \textbf{99.8} & \textbf{99.0} \\
			\midrule
			\multirow{5}{*}{\textbf{$\Ctwo$}} & \textbf{LSTM (Baseline)} & \textbf{100.0}  & \textbf{100.0} & 99.9 \\
			\cmidrule{2-5}
			&\textbf{Transformer (Absolute Positional Encodings)} & \textbf{100.0} & \textbf{100.0} & \textbf{100.0} \\
			&\textbf{Transformer (Relative Positional Encodings)} & \textbf{100.0} & \textbf{100.0} & \textbf{100.0} \\
			&\textbf{Transformer (Only Positional Masking)} & \textbf{100.0} & \textbf{100.0} & \textbf{100.0} \\
			\midrule
			\multirow{5}{*}{\textbf{$\Cthree$}} & \textbf{LSTM (Baseline)} & \textbf{100.0}  & \textbf{100.0} & 97.8 \\
			\cmidrule{2-5}
			&\textbf{Transformer (Absolute Positional Encodings)} & \textbf{100.0} & 62.1 & 5.3 \\
			&\textbf{Transformer (Relative Positional Encodings)} & \textbf{100.0} & 31.3 & 22.0 \\
			&\textbf{Transformer (Only Positional Masking)} & \textbf{100.0} & \textbf{100.0} & \textbf{100.0} \\
			\midrule
			\multirow{5}{*}{\textbf{$a^nb^nc^nd^n$}} & \textbf{LSTM (Baseline)} & \textbf{100.0}  & \textbf{100.0} & 99.9 \\
			\cmidrule{2-5}
			&\textbf{Transformer (Absolute Positional Encodings)} &88.45 & 0.0  & 0.0 \\
			&\textbf{Transformer (Relative Positional Encodings)} & 41.1  & 0.0  & 0.0 \\
			&\textbf{Transformer (Only Positional Masking)} & \textbf{100.0} & \textbf{100.0} & 99.4 \\
			\midrule
		\end{tabular}
		\caption{\label{tab:countresults} The performance of Transformers and LSTMs on the respective counter languages. Refer to section \ref{sec:res_count} in the main paper for details.}
	}
\end{table*}

\begin{table*}[t]

	\small{\centering
	\begin{tabular}{P{9em}P{3em}P{3em}P{3em}P{3em}P{3em}P{3em}P{3em}P{3em}}
		\toprule
		&&&\multicolumn{2}{c}{\thead{ \textbf{Transformer} \\ \textbf{(Only Positional Masking)}}} &\multicolumn{2}{c}{\thead{ \textbf{Transformer} \\ \textbf{(w Position Encodings)}}}&\multicolumn{2}{c}{\thead{\textbf{LSTM}}} \\ 
		\cmidrule(lr){4-5}\cmidrule(lr){6-7}\cmidrule(lr){8-9}
	 	\textbf{Language} &\textbf{Property} &\textbf{Dot-Depth}& \textbf{Bin 0}& \textbf{Bin 1}& \textbf{Bin 0}& \textbf{Bin 1} & \textbf{Bin 0}& \textbf{Bin 1}\\
		\midrule
		Tomita 1& SF &1&100.0&100.0&100.0&100.0&100.0&100.0\\
		Tomita 4& SF (LT-$k$) &1&24.1&0.2&100.0&92.4&100.0&100.0\\
		Tomita 7&SF &1&100.0&100.0&99.9&99.8&100.0&100.0\\
		Tomita 2 = $\dn{1} = (01)^*$& SF &1&100.0&100.0&100.0&100.0&100.0&100.0\\
		$aa^*bb^*cc^*dd^*ee^*$&SF&1& 100.0 & 100.0 & 100.0 & 100.0 &100.0 & 100.0\\
		$\{a,b\}^*d\{b, c\}^*$ & SF&1& 100.0 & 100.0 & 100.0 & 100.0 &100.0 & 100.0\\
		$\{0,1,2\}^*02^*$ &SF&2&74.2&35.6&100.0&68.7&100.0&100.0\\
		$\dn{2}$&SF &2&7.8&0.4&74.6&3.1&100.0&100.0\\
		$\dn{3}$& SF &3&16.2&4.2&80.9&8.5&100.0&100.0\\
		$\dn{4}$& SF &4&36.9&15.6&90.2&3.3&100.0&100.0\\
		$\dn{12}$& SF &12&16.5&0.0&95.8&1.5&100.0&100.0\\			
		Parity & non-SF & $-$ & 22.0 & 0.0 & 68.7 &0.0 & 100.0 & 100.0\\
		$(aa)^{*}$&non-SF&$-$  &0.0&0.0&100.0&0.0&100.0&100.0\\
		$(aaaa)^{*}$&non-SF&$-$  &0.0&0.0&100.0&0.0&100.0&100.0\\
		$(abab)^{*}$&non-SF&$-$  &0.0&0.0&100.0&2.5&100.0&100.0\\
		Tomita 3& non-SF&$-$ &9.8&9.8&75.4&10.8&100.0&100.0\\
		Tomita 5& non-SF&$-$  &4.9&0.0&29.3&0.0&100.0&100.0\\
		Tomita 6& non-SF &$-$ &9.1&0.0&88.8&0.0&100.0&100.0\\
		\bottomrule

	\end{tabular}
\caption{\label{tab:regresults}Summary of results on Regular Languages. The languages are arranged in an increasing order of their complexities.}
}

\end{table*}

\section{Plots}\label{sec:plots}

We visualize different aspects of the trained models to understand how they achieve a particular task and if the learned behaviour resembles our constructions. Figure \ref{fig:values} shows the value vectors corresponding to the trained models on Shuffle-2 and Boolean-3 Language. We also visualize the attention weights corresponding to these two models in Figure \ref{fig:attnwts}. Similar to the self-attention output visualizations for Shuffle-2 and Boolean-3 in the main paper, we visualize these values for a model trained on Shuffle-4 in Figure \ref{fig:attn_out_shuff4} and again, find close correlations with the depth to length ratios of different types of brackets in the language.   Finally, in Figure \ref{fig:learned_aa}, we visualize a component of the learned position embeddings vectors and found a similar behaviour to $\cos(n\pi)$ agreeing with our hypothesis.

\begin{figure}[t]
	\centering
	\begin{subfigure}{.4\textwidth}
		\centering
		\includegraphics[scale = 0.35]{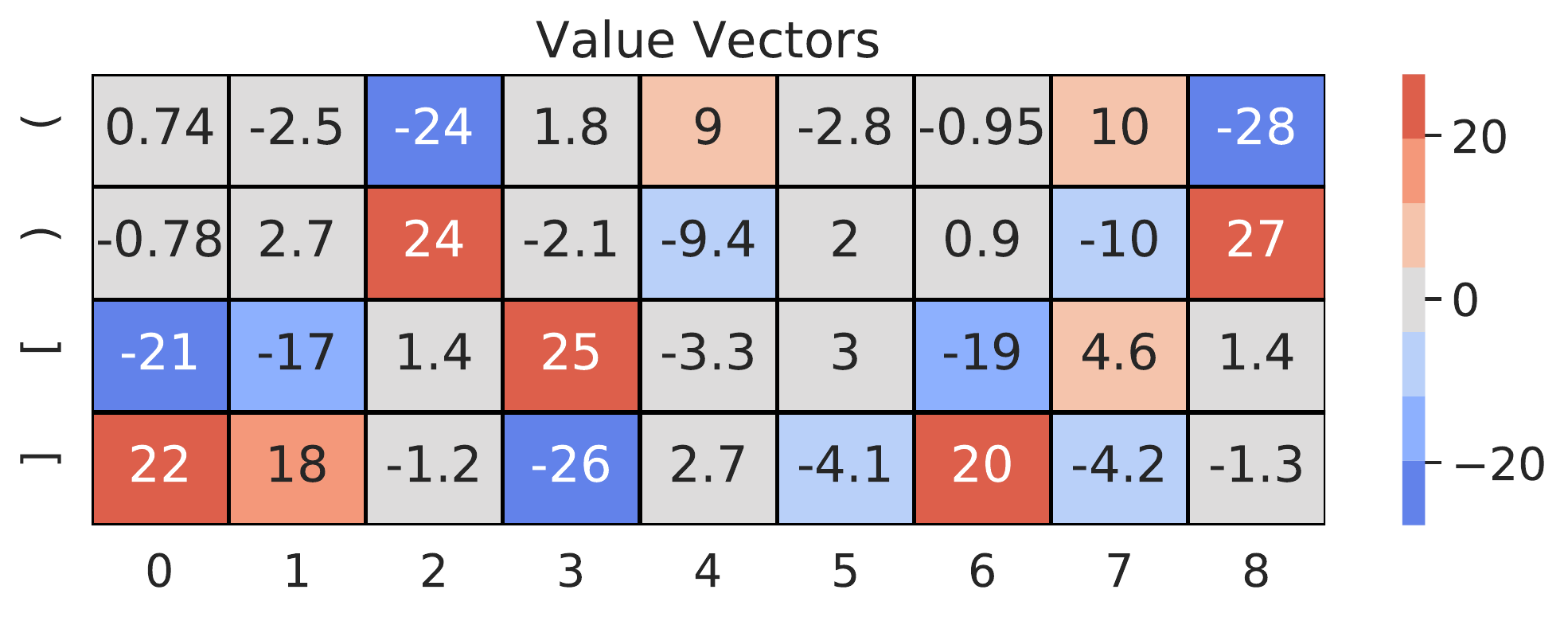}
		\caption{\label{fig:shuffvals}}
	\end{subfigure}
	\begin{subfigure}{.4\textwidth}
		\centering
		\includegraphics[scale=0.35]{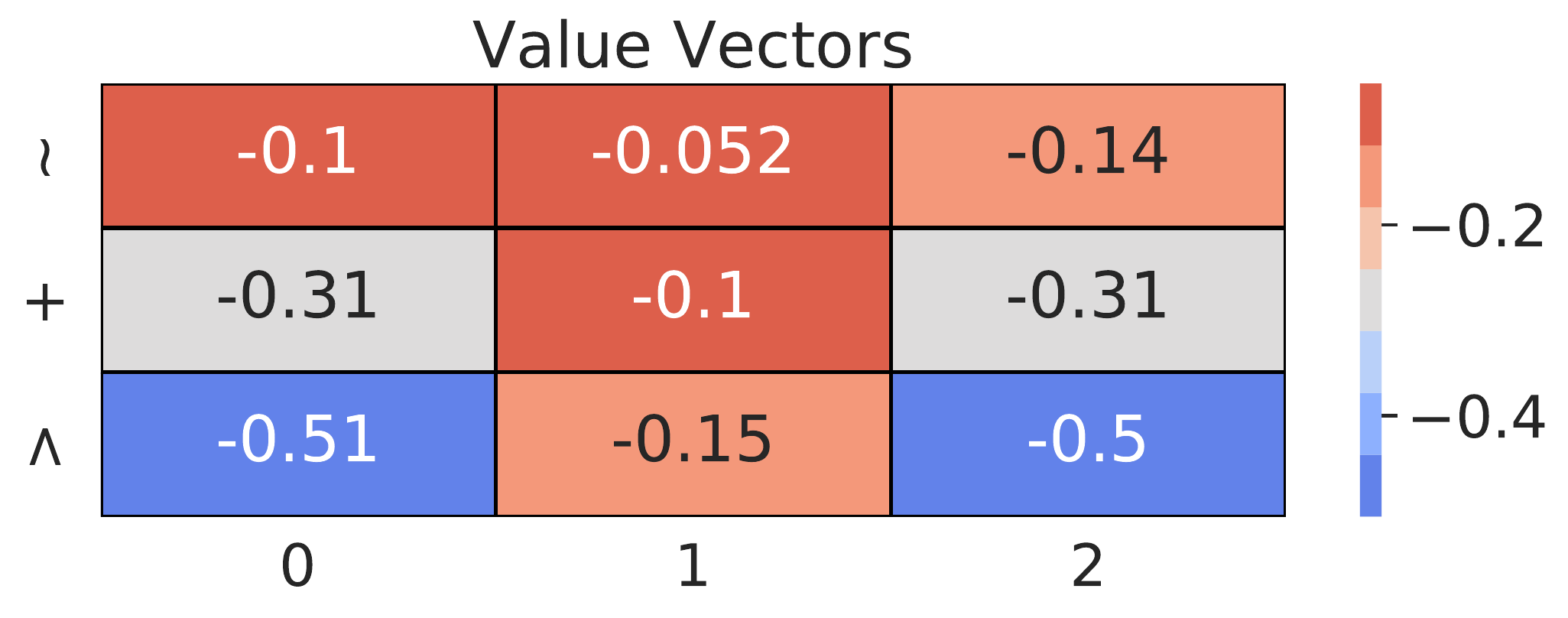}
		\caption{\label{fig:boolvals}}
	\end{subfigure}
	\caption{\label{fig:values} Plot of value vectors of  transformer based models trained on Shuffle-2 \ref{fig:shuffvals} and Boolean-3 language \ref{fig:boolvals}. The Shuffle-2 model had a hidden size of 8 and boolean-3 model had a hidden size of 3. The x-axis corresponds to different components of the value vectors for both models. Shuffle-2  language consisted of square and round brackets, while for Boolean-3 we considered 3 operators namely: $\sim$ a unary operator, $+$ a binary operator and finally, $>$ which is a ternary operator..}
\end{figure}

\begin{figure*}
	\centering
	\begin{subfigure}{.5\textwidth}
		\centering
		\includegraphics[scale = 0.12]{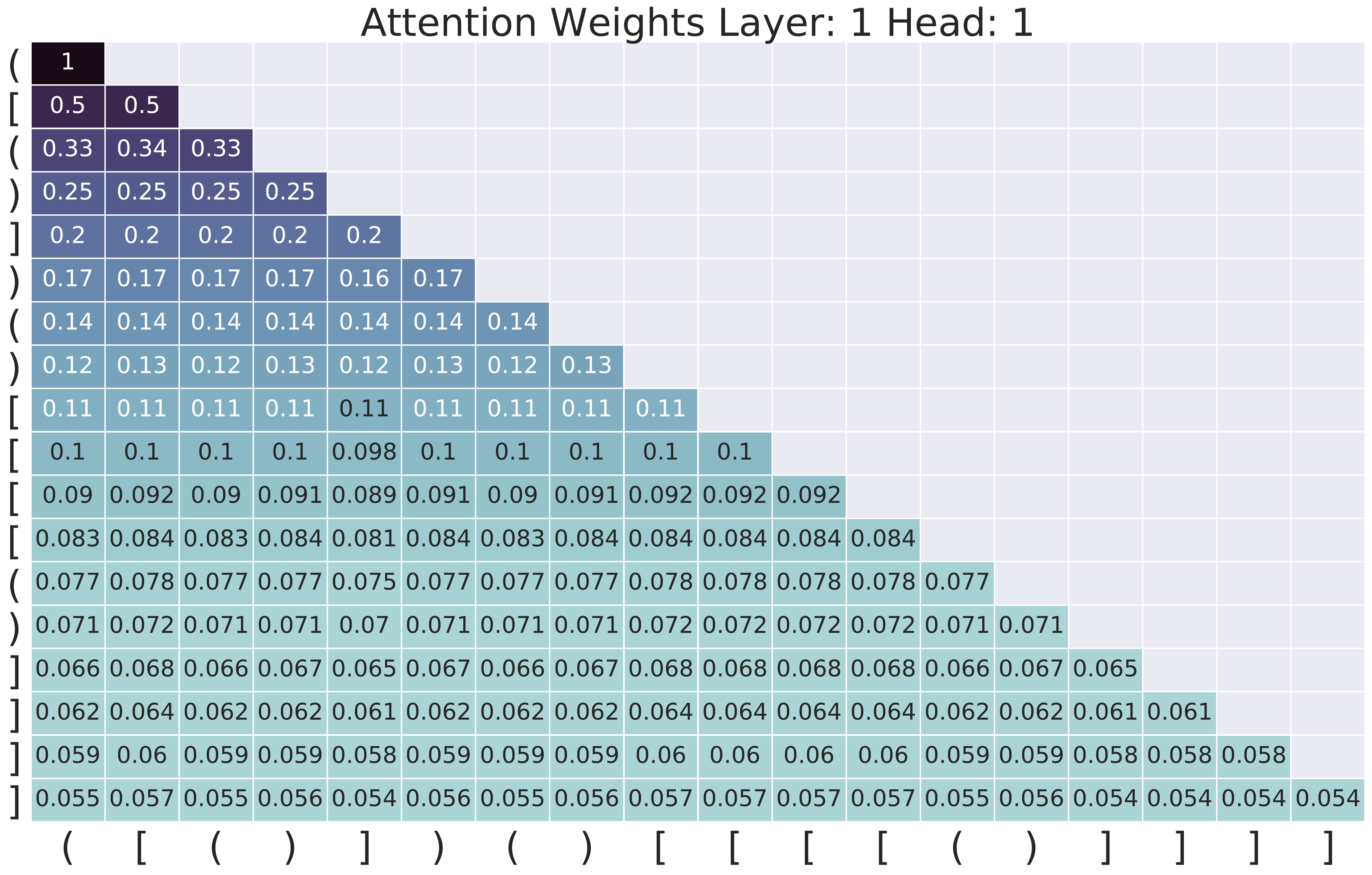}
		\caption{\label{fig:shuffattn}}
	\end{subfigure}%
	\begin{subfigure}{.5\textwidth}
		\centering
		\includegraphics[scale = 0.12]{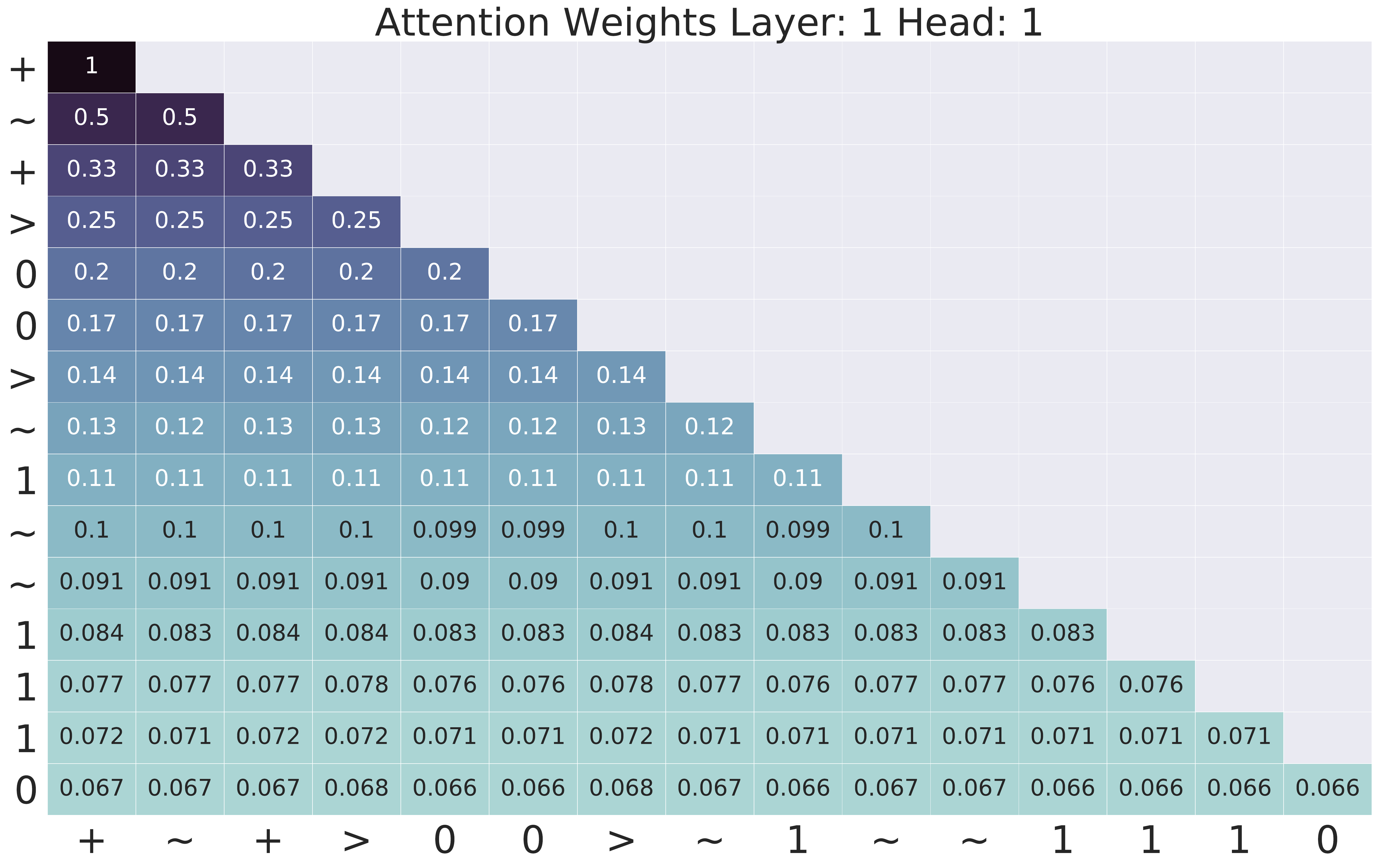}
		\caption{\label{fig:boolattn}}
	\end{subfigure}
	\caption{\label{fig:attnwts} Attention maps for models trained on Shuffle-2 and Boolean-3 languages. Similar to our constructions for recognizing these languages, we observe nearly uniform attention weights in both cases}
\end{figure*}

\begin{figure*}[t]
	\centering
	\begin{subfigure}{.5\textwidth}
		\centering
		\includegraphics[scale=0.12]{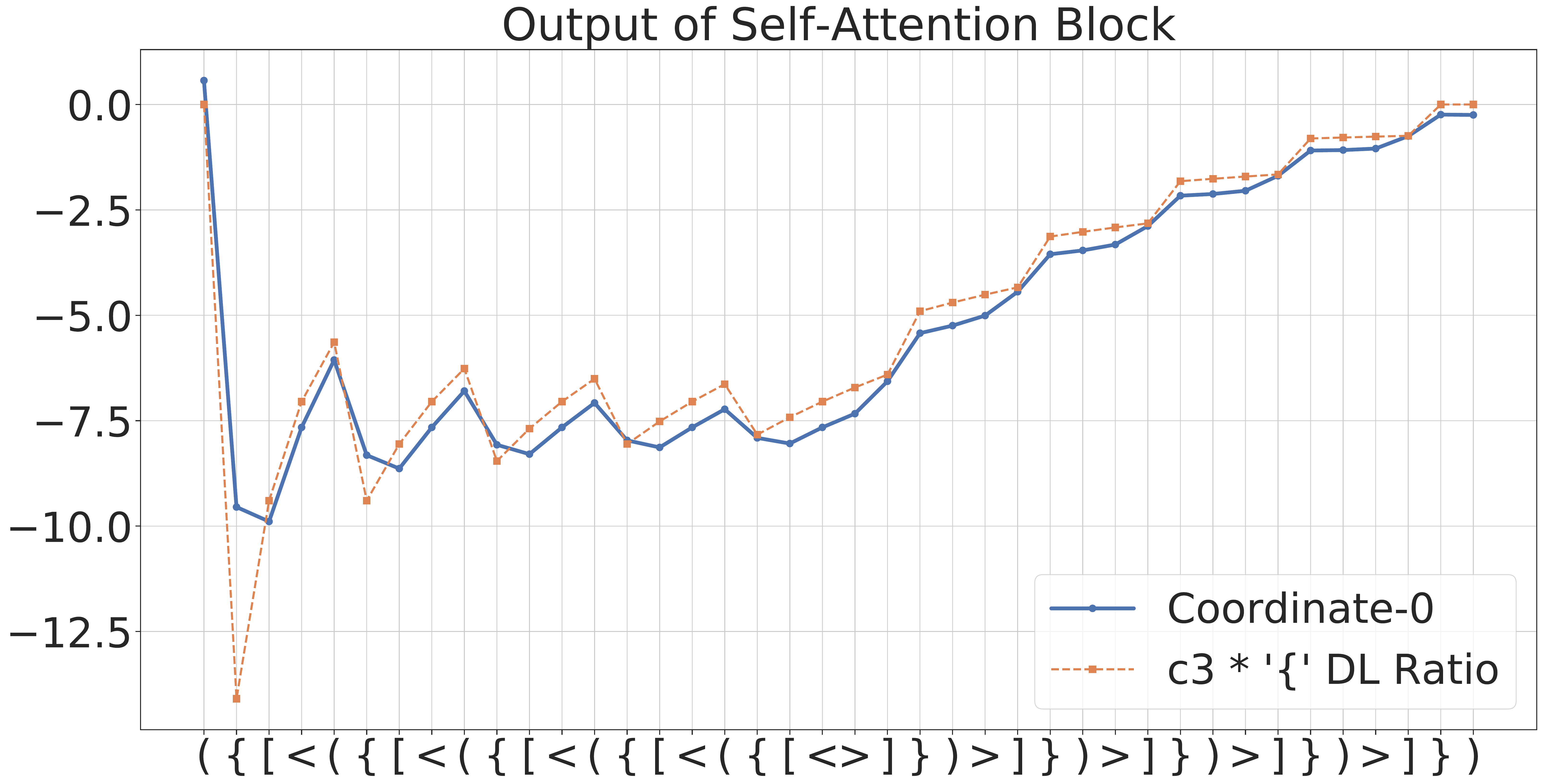}
		\caption{}
	\end{subfigure}%
	\begin{subfigure}{.5\textwidth}
		\centering
		\includegraphics[ scale = 0.12]{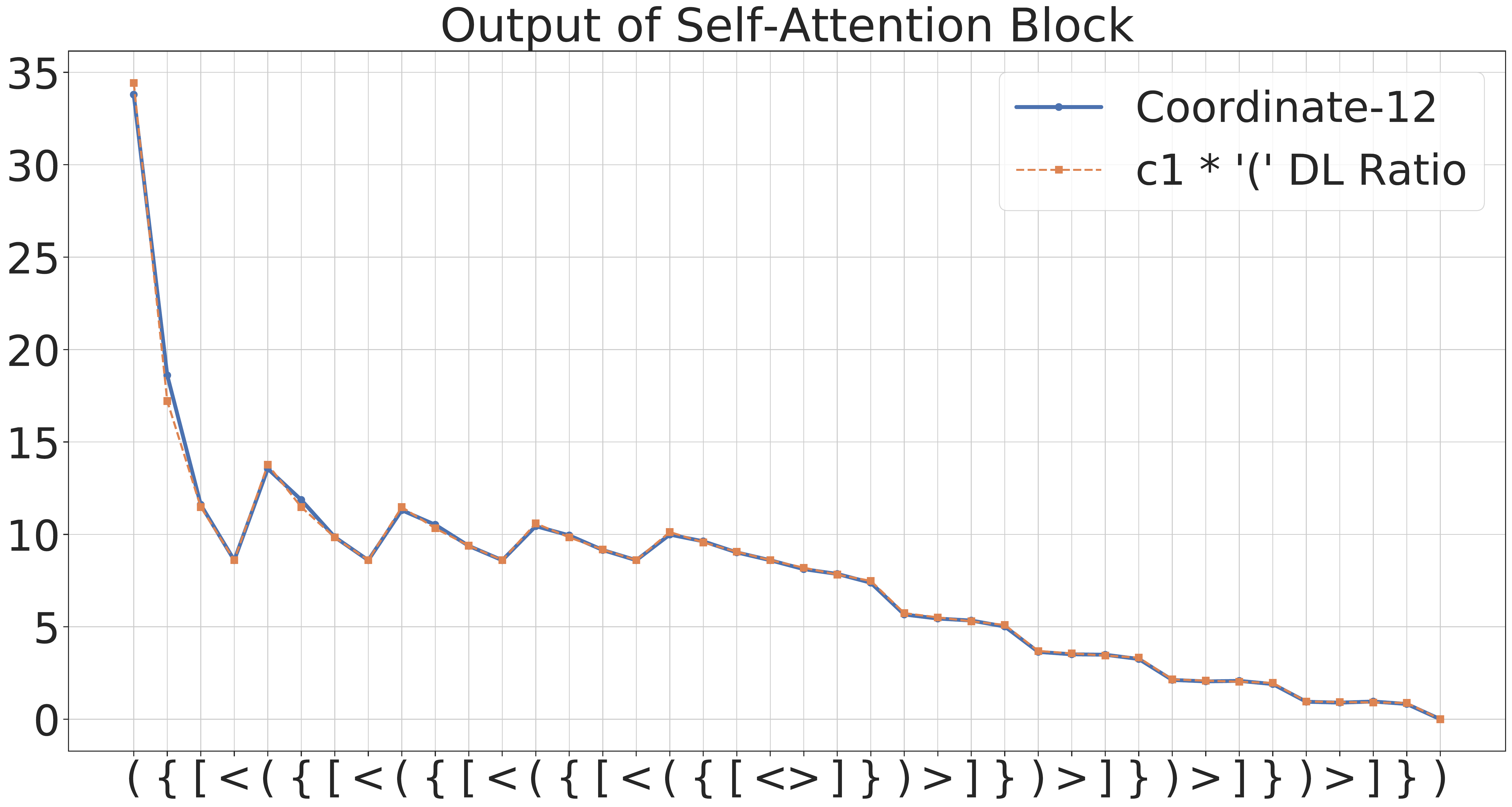}
		\caption{ }
	\end{subfigure}
	\newline	
	\begin{subfigure}{.5\textwidth}
		\centering
		\includegraphics[scale=0.12]{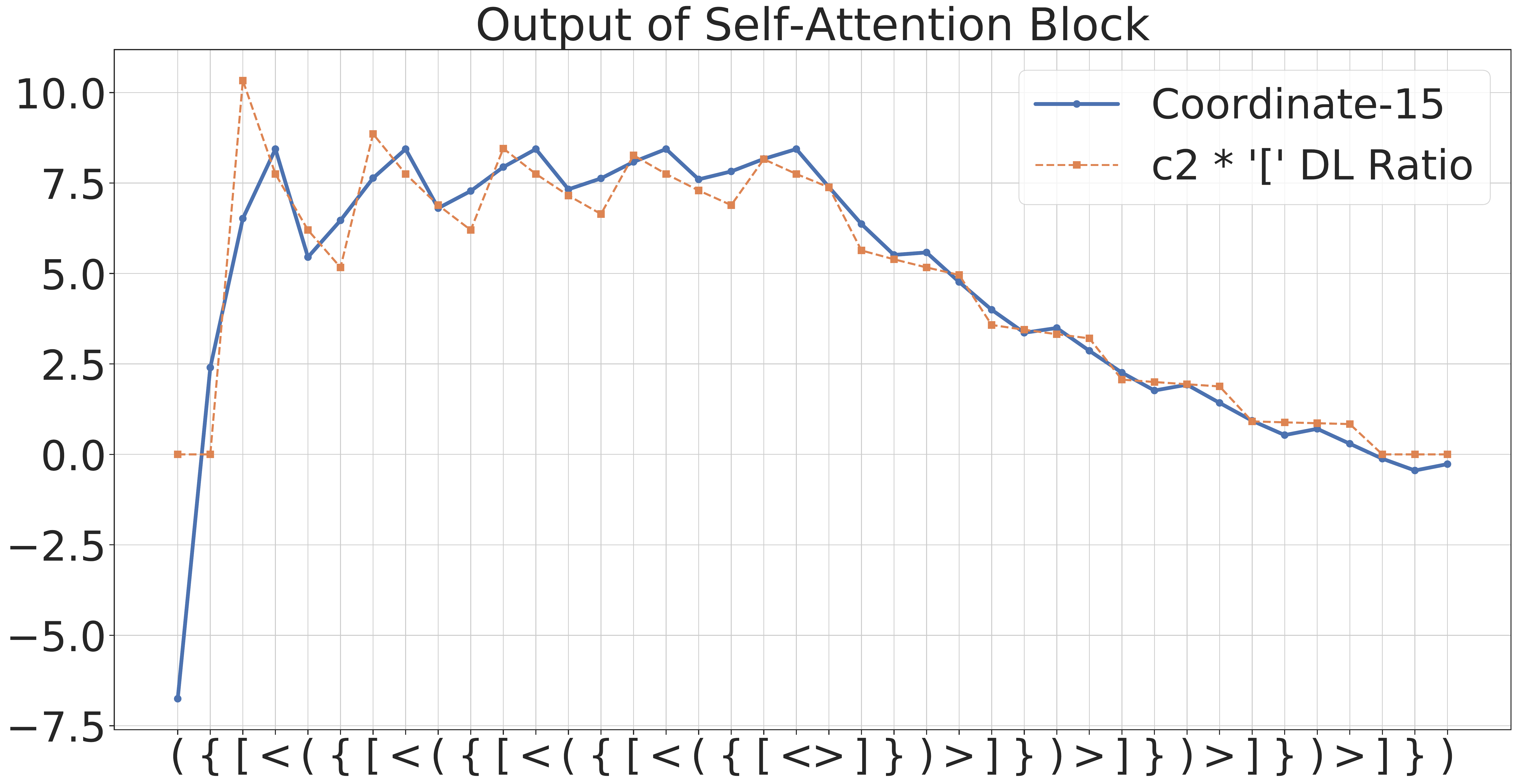}
		\caption{ }
	\end{subfigure}%
	\begin{subfigure}{.5\textwidth}
		\centering
		\includegraphics[scale=0.12]{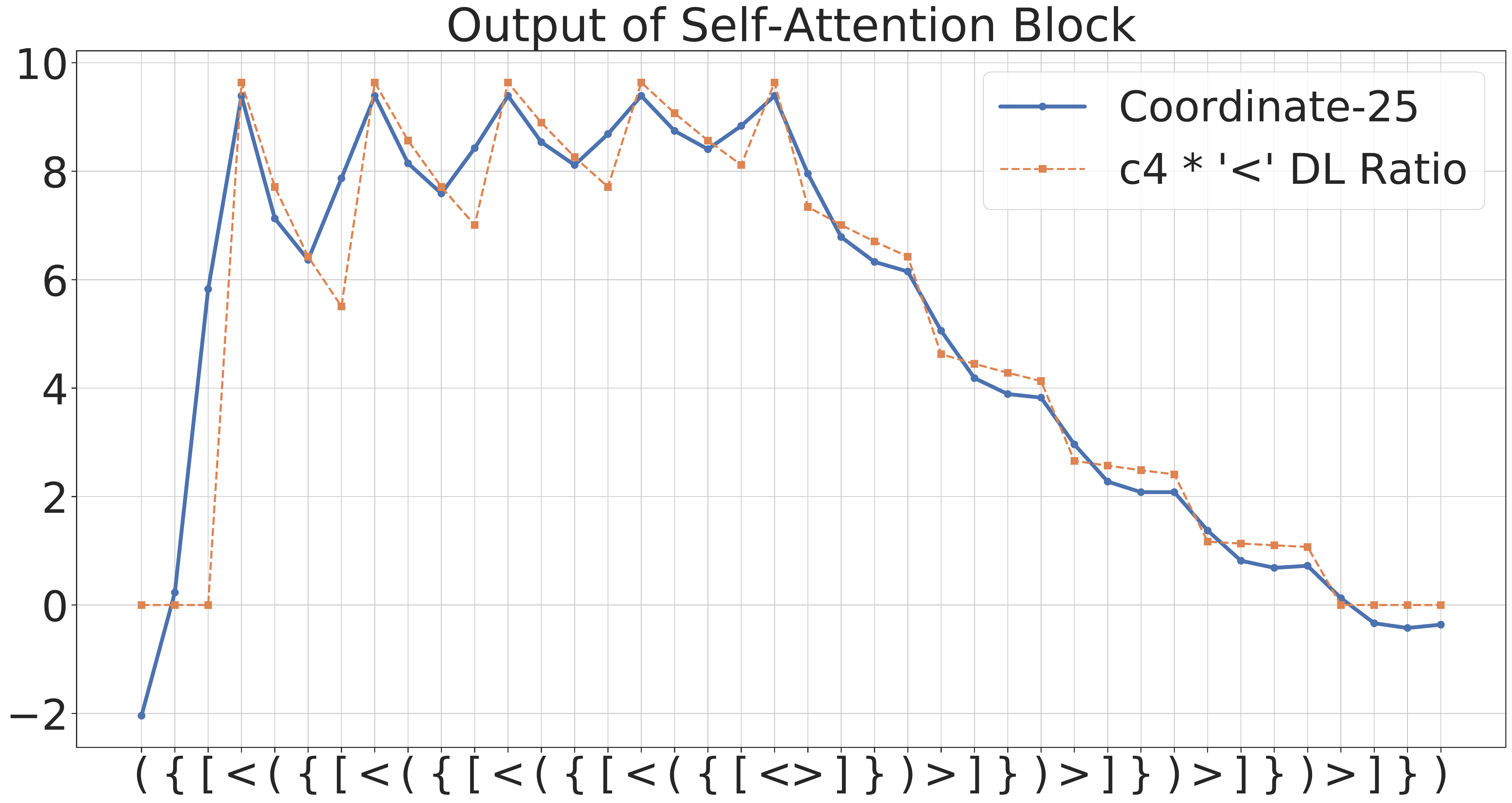}
		\caption{ }
	\end{subfigure}
	\caption{\label{fig:attn_out_shuff4} Values of four different coordinates of the output of self-attention block. The model is trained to recognize Shuffle-4. The dotted lines are the scaled depth to length ratio for the four types of bracket provided for reference. }
\end{figure*}

\begin{figure}
	
	\centering
	\includegraphics[scale=0.12]{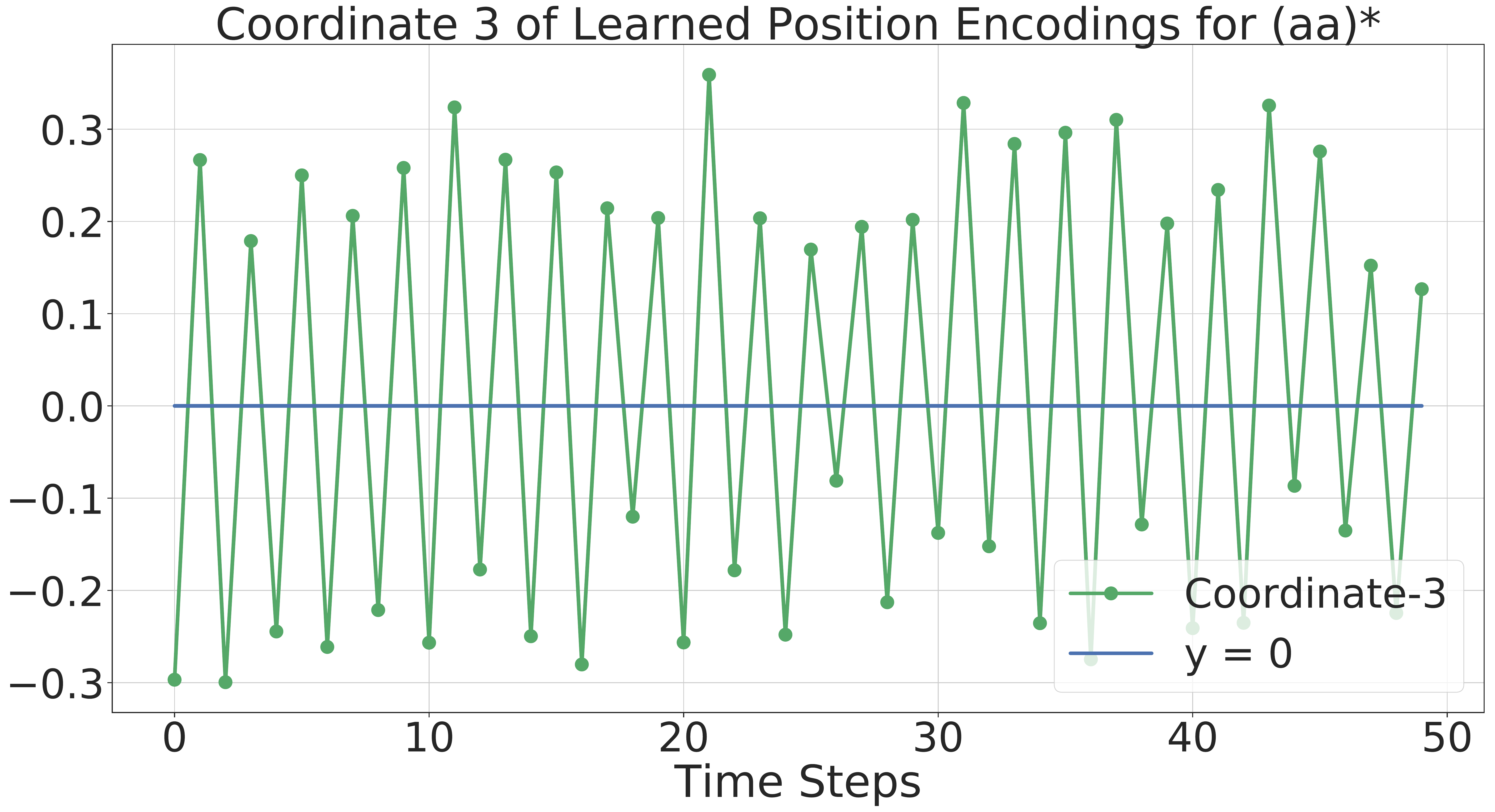}
	\caption{\label{fig:learned_aa} The values of coordiante 3 of the learned position encodings on the language $(aa)^*$. The variation in the encodings resemble a periodic behaviour similar to $cos(n\pi)$}
	
\end{figure}

\end{document}